\documentclass[12pt]{article}

\usepackage{dsfont}                     
\usepackage{bm}                        

\usepackage{amsmath,amssymb,amsthm}

\theoremstyle{plain}
\newtheorem{lem}{Lemma}[section]
\newtheorem{thm}[lem]{Theorem}

\theoremstyle{definition}

\theoremstyle{remark}

\usepackage{graphicx}                  
\usepackage{epsfig}                    
\usepackage{epic,eepic}                
\usepackage{subfigure}                 
\usepackage{tikz}                      
\usetikzlibrary{shapes.geometric,arrows}
\usepackage{color}                     

\usepackage{multirow}
\usepackage{makecell}

\usepackage{algorithm}
\usepackage{algorithmic}

\usepackage[width=.95\textwidth]{caption}

\usepackage[title]{appendix}

\usepackage{ifpdf}

\newcommand{\bx}{\boldsymbol{x}}
\newcommand{\bX}{\boldsymbol{X}}
\newcommand{\by}{\boldsymbol{y}}
\newcommand{\bY}{\boldsymbol{Y}}
\newcommand{\bz}{\boldsymbol{z}}

\newcommand{\rmd}{\mathrm{d}}

\newcommand{\email}[1]{\href{mailto:#1}{\texttt{#1}}}
\def\keywords{\vspace{.5em}
	\noindent{\textbf{\textit{Keywords:}}\,}%
}
\def\endkeywords{\par}

\definecolor{darkgreen}{rgb}{0.0, 0.5, 0.0}
\usepackage[colorlinks=true,
linkcolor=darkgreen,
citecolor=darkgreen,
urlcolor=black
]{hyperref}
\usepackage{cleveref}

\title{\large\bf Coupling-Informed Transport Maps for Bayesian Filtering in Nonlinear Dynamical Systems\thanks{Updated on 2026-05-01}}
\author{Dengfei Zeng\thanks{School of Mathematical Sciences, Tongji University, Shanghai, 200092 China (\email{dfzeng@tongji.edu.cn}).}
	\and Lijian Jiang\thanks{Corresponding author. School of Mathematical Sciences, Tongji University, Shanghai, 200092 China, Key Laboratory of Intelligent Computing and Applications (Ministry of Education), Tongji University, Shanghai, 200092 China (\email{ljjiang@tongji.edu.cn})}
	\and Shuyu Sun\thanks{School of Mathematical Sciences, Tongji University, Shanghai, 200092 China, Key Laboratory of Intelligent Computing and Applications (Ministry of Education), Tongji University, Shanghai, 200092 China (\email{suns@tongji.edu.cn}, \email{xiaodunhui@tongji.edu.cn})}
	\and Dunhui Xiao \footnotemark[4]}

\date{}

\begin{document}
	\maketitle
	
	\begin{center}
		\textbf{Abstract}
	\end{center}
	\begin{quotation}
		A likelihood-free transport filtering method is proposed based on the couplings between state and observation variables. 
		By exploiting a block-triangular structure in the transport map, the analysis step of filtering is reformulated as the minimization of the maximum mean discrepancy (MMD) between the true joint measure and its transport-based approximation. 
		To circumvent the non-convexity in the MMD optimization, we introduce a training-free transport filter method via gradient flows, which leads to an analytic computation for the transport map that implies the steepest descent direction of the MMD. 
		The proposed approach accurately approximates non-Gaussian filtering posteriors and avoids particle collapse. 
		We provide a convergence analysis for the expectation of the MMD between the approximated posterior and the truth posterior. Finally, we extend the method to high-dimensional problems through domain localization.
		Numerical examples demonstrate the superior performance of our approach over conventional filtering methods in nonlinear, non-Gaussian scenarios.
		
		\keywords
		Nonlinear filtering, Transport map, Maximum mean discrepancy, Coupling, Gradient flow
		\endkeywords
	\end{quotation}
	
	
	\section{Introduction}
	\label{sec:1}
	In recent years, there has been growing interest in the fusion of data and physical models. Data assimilation is a powerful approach for integrating observation data with model predictions to produce reliable state estimates \cite{aschDataAssimilationMethods2016, lawDataAssimilationMathematical2015}. It has been widely used in various fields such as numerical weather prediction, signal processing, data driven modelling, and finance \cite{carrassiDataAssimilationGeosciences2018, liuPerronFrobeniusOperator2024,rothEnsembleKalmanFilter2017, wangModelFreeData2025}. Sequential Bayesian filtering constitutes a fundamental class of methods in data assimilation. It utilizes model information as the prior and incorporates the likelihood function through observation data, aiming to best combine model and data from a Bayesian posterior perspective. However, the implementation of filtering algorithms is significantly hampered by the high dimensionality and inherent nonlinearity of the underlying models, as well as the spatiotemporal scarcity of observational data.
	
	Sampling techniques provide a fundamental framework for iteratively estimating Bayesian posteriors.
	The particle filter (PF), as a class of Monte Carlo methods based on sequential importance sampling, offers a nonparametric way to approximate the posterior distribution \cite{gordonNovelApproachNonlinear1993}. 
	In the asymptotic regime, the posterior approximated by PF converges to the true posterior of the filtering problem as the number of particles increases to infinity. 
	However, PF suffers from the curse of dimensionality in high-dimensional settings, which often manifests as particle collapse during sequential updating \cite{snyderObstaclesHighdimensionalParticle2008}. 
	Localization-based particle filter methods have been proposed to mitigate the challenges of particle filters in high-dimensional problems \cite{pennyLocalParticleFilter2016, poterjoyLocalizedParticleFilter2016}. 
	These methods restrict the resampling step to multiple low-dimensional local state blocks and then combine the results to obtain a global state update. 
	However, such patching strategies may lead to non-physical discontinuities in the updated particles at the boundaries between local regions \cite{farchiReviewArticleComparison2018}. 
	Alternative data assimilation methods obtain approximate posterior estimates by imposing specific constraints on the filtering problem. 
	The Kalman Filter (KF) derives an analytical posterior under the assumption of linearity and Gaussianity, while the Extended Kalman Filter (ExKF) extends this framework to nonlinear systems through linearization using tangent linear models. 
	The Ensemble Kalman Filter (EnKF) is currently one of the most prevalent methods for nonlinear filtering and is widely used in major numerical weather prediction systems. 
	EnKF represents the Bayesian prior and posterior using an ensemble of samples and provides a reliable estimate of the state in weakly nonlinear systems through a linear transformation \cite{evensenEnsembleKalmanFilter2003, evensenSamplingStrategiesSquare2004, evensenEnsembleKalmanFilter2009}. 
	For high-dimensional systems, EnKF employs localization and covariance inflation to enhance computational efficiency and provide more accurate variance estimates \cite{huntEfficientDataAssimilation2007}. 
	However, for strongly nonlinear systems, the corresponding Bayesian prior and posterior distributions often exhibit non-Gaussian characteristics.
	The EnKF applies a uniform Kalman gain to all ensemble members, which negatively impacts on its effectiveness in such cases and leads to poor performance in highly nonlinear regimes \cite{sakovIterativeEnKFStrongly2012}. 
	A more flexible nonlinear transformation should therefore be adopted to better accommodate the complexities of strongly nonlinear systems. 
	
	The idea of transforming probability measures via (optimal) coupling/transport has a long history and wide applications in statistics, machine learning, and related fields \cite{peyreComputationalOptimalTransport2019, villaniOptimalTransportOld2009}. 
	Recently, filtering algorithms based on (optimal) coupling or transport have attracted growing attention. Given samples $\bx_1, \bx_2, \cdots, \bx_N$ from a prior distribution $p(\bx)$ and observation $\by^{o}$, the goal is to find a transformation(transport map) $\mathbf{T}$ such that it yields samples $\mathbf{T}(\bx_1), \mathbf{T}(\bx_2), \cdots, \mathbf{T}(\bx_N)$ from the posterior distribution $p(\bx \mid \by=\by^{o})$. 
	The transport map can be obtained by explicitly constructing an optimal coupling, using particle filter outputs without resampling as the posterior reference \cite{acevedoSecondorderAccurateEnsemble2017, reichNonparametricEnsembleTransform2013}. These approaches may face the curse of dimensionality similar to the particle filter (PF) in high-dimensional problems. Alternatively, transport maps can be derived by minimizing the Maximum Mean Discrepancy (MMD) with an added variance penalty; this method can be extended to high-dimensional problems by partitioning observations into multiple lower-dimensional subsets \cite{zengEnsembleTransportFilter2026}. 
	Particle flow-based filtering methods have fundamental connections with optimal transport. These approaches often characterize the transformation from prior to posterior distributions via continuous dynamics governed by ordinary differential equations. The feedback particle filter (FPF) performs Bayesian updates by introducing a mean-field type feedback control \cite{yangContinuousdiscreteTimeFeedback2014, yangFeedbackParticleFilter2013}. The variants of FPF employ deep learning or diffusion maps to approximate the feedback gain function \cite{olmezDeepFPFGain2020, taghvaeiDiffusionMapbasedAlgorithm2020}. 
	The variational mapping particle filter constructs a specific form of gradient flow by performing gradient descent on the Kullback–Leibler divergence in a reproducing kernel Hilbert space \cite{pulidoSequentialMonteCarlo2019}. This approach analytically constructs the form of the transport map. However, it relies on a Gaussian mixture approximation of the prior and requires an analytically tractable likelihood function.
	In addition, coupling techniques have been employed to construct transport maps from the prior to the posterior distribution by approximate the joint distribution with a specific structure of transport map or an optimization target. Spantini et al., \cite{spantiniCouplingTechniquesNonlinear2022}, proposes a two-step filtering method based on a monotone triangular transport map -- the Knothe-Rosenblatt rearrangement, using a standard normal distribution as an intermediate step. Likelihood-free filter methods are contructed to push forward prior to posterior based on a block-triangular structure and the optimization of discrepancy between the truth and approximated joint distribution \cite{al-jarrahNonlinearFilteringBrenier2025, baptistaConditionalSamplingMonotone2024}. 
	
	In this paper, we incoporate the idea of coupling to formulate a likelihood-free filter method. Similar to the work of Al-Jarrah et al., \cite{al-jarrahNonlinearFilteringBrenier2025}, a block-triangular transport map $\mathcal{T}$ is constructed to match two couplings: the reference joint measure $\pi_{\bx, \by}=\pi_{\by\mid\bx}\otimes\pi_{\by}$ and the approximated joint measure $\mathcal{T}_{\sharp}(\pi_{\bx}\otimes\pi_{\by})$. 
	The main contributions of the paper are twofold. 
	First, the MMD is introduced as the objective function to quantify the discrepancy between the approximate joint measure and the reference measure. It is shown that when the MMD vanishes, the state block of the block-triangular transport map pushes the prior distribution to the truth posterior. By leveraging reproducing kernel Hilbert spaces (RKHS), the kernel-based MMD enables the transport map to be trained efficiently; 
	Second, gradient flows are introduced to characterize the block-triangular transport map. Within this framework, we analytically derive the G\^{a}teaux derivative of the kernel-based MMD to obtain its steepest descent direction, and based on this, we construct a sequence of transport maps that progressively push the prior toward the posterior, without requiring explicit parametrization or training of the transport map. In this paper, the proposed method admits an upper bound of $\mathcal{O}(1/\sqrt{N})$ on the expectation of MMD between the approximate posterior and the truth posterior. We also utilize a domain localization strategy for constructing transport maps in high-dimensional settings. By appropriately partitioning the state space into localized regions, the corresponding transport maps can be trained efficiently and in parallel across different sub-regions. 
	
	The rest of the paper is organized as follows: In \Cref{sec:2}, we briefly review the sequential Bayesian filtering, the transport map for filtering, and the definition of MMD. In \Cref{sec:3}, we discuss the coupling-based transport map filter and its variant based on gradient flow. Moreover, a domain localization strategy for high-dimensional problems is presented. Several numerical examples are demonstrated to show the performance of the proposed method in \Cref{sec:4}. Finally, some conclusions were made in \Cref{sec:5}.
	
	\section{Preliminaries}
	\label{sec:2}
	
	Consider a continuous-time dynamical system governed by the stochastic differential equation
	\begin{equation}\label{eq:2.1}
		\rmd \bx_t = \mathbf{M}(\bx_t) \rmd_t + \rmd \mathbf{N}_t,
	\end{equation}
	where $\bx_t$ is a n-dimensional vector representing the state of the system at a given time and the system noise $\mathbf{N}_t$ is a $\mathbb{R}^n$-valued stochastic process. To employ discrete-time filtering methods \cite{carrassiDataAssimilationGeosciences2018}, the system is typically discretized. Applying the Euler–Maruyama scheme \cite{higham.AlgorithmicIntroductionNumerical2001} with a uniform time step $\Delta t$ yields
	\begin{equation*}
		\bx_{k\Delta t} = \bx_{(k-1)\Delta t}+ \mathbf{M}(\bx_{(k-1)\Delta t})\Delta t + \Delta \mathbf{N}_t
	\end{equation*}
	where $\Delta \mathbf{N}_t = \mathbf{N}_{k\Delta t}-\mathbf{N}_{(k-1)\Delta t}$.
	Noisy observations of the state are available at discrete times $\{t_1, t_2, \cdots, t_k, \cdots\}$, and are modeled as
	\begin{equation}\label{eq:2.2}
		\by_{t_k} = \mathbf{H}(\bx_{t_k}) + \boldsymbol{\varepsilon}_{t_k},
	\end{equation}
	where $\mathbf{H}:\mathbb{R}^n\to \mathbb{R}^m$ is the observation operator and $\boldsymbol{\varepsilon}_{t_k}$ is the observation noise at time $t_k$. 
	In the context of nonlinear stochastic dynamical systems, filtering aims to identify a state estimate that optimally aligns with both the system dynamics \eqref{eq:2.1} and the observations generated by \eqref{eq:2.2}. From a Bayesian perspective, this is achieved by utilizing posterior moments; this approach yields a point estimate while simultaneously providing a formal quantification of uncertainty.
	
	\subsection{Sequential Bayesian Filter}
	\label{sec:2.1}
	
	Given the sequence of model states $\bx_{0:K}=\{\bx_0, \bx_1, \cdots, \bx_K\}$ and observations $\by_{0:K}=\{\by_1, \by_2, \cdots, \by_K\}$, repectively. Sequential Bayesian filter iteratively estimate the posterior distribution $p_{\bx_k\mid \by_{1:k}}$ with forcast step and analysis step. 
	
	Forecast step incorporates the information of dynamical system by estimating the prior distribution at time $k$ with Chapman-Kolmogorov equation
	\begin{equation}\label{eq:2.3}
		p_{\bx_k\mid \by_{1:k-1}} = \int p_{\bx_k\mid \bx_{k-1}} p_{\bx_{k-1}\mid \by_{1:k-1}}\rmd \bx_{k-1},
	\end{equation}
	where $p_{\bx_k\mid \bx_{k-1}}$ is the transition probability govern by dynamical system \eqref{eq:2.1}. Analysis step fits the observations via the Bayes' rule
	\begin{equation}\label{eq:2.4}
		p_{\bx_k\mid \by_{1:k}} = \frac{p_{\bx_{k}\mid \by_{1:k-1}}p_{\by_k\mid \bx_k}}{p_{\by_k\mid \by_{1:k-1}}},
	\end{equation}
	where $p_{\by_k\mid \bx_k}$ is the likelihood given by the observation process \eqref{eq:2.2}. Marginal likelihood $p_{\by_k\mid \by_{1:k-1}}$ is defined by $p_{\by_k\mid \by_{1:k-1}}=\int p_{\bx_k\mid \by_{1:k-1}}p_{\by_k\mid \bx_k}\rmd \bx_k$. 
	
	For Linear-Gaussian cases, the forecast and analysis step can be computed analytically, which gives the Kalman filter. 
	To accommodate nonlinear dynamics, the ExKF introduces a tangent linear approximation, formally represented in \eqref{eq:2.1}–\eqref{eq:2.2}, yet it remains prone to failure in strongly nonlinear regimes.
	Particle filters offer an asymptotically exact posterior approximation for nonlinear problems but are hampered by severe particle degeneracy in high-dimensional spaces.
	Ensemble-based filters use an ensemble of particles to characterize the uncertainty of states and observations. 
	Specifically, the EnKF applies a linear transformation to update the ensemble, effectively adjusting the mean and covariance. To enhance flexibility, nonlinear transformations may be employed for more robust posterior approximations. In this work, we propose constructing a transformation from the prior to the posterior ensemble using transport maps
	
	\subsection{Transport Maps for Filtering}
	\label{sec:2.2}
	
	In the forecast step, starting from an ensemble $\{\bx_{k-1}^{(i)}\}_{i=1}^N$ drawn from the previous posterior distribution $p_{\bx_{k-1}\mid \by_{1:k-1}}$, the samples $\{\hat{\bx}_{k}^{(i)}\}_{i=1}^N$ of current prior distribution $p_{\bx_k\mid \by_{1:k-1}}$ are obtained by numerical simulation of the dynamical system \eqref{eq:2.1}. The simulation process can be formulated as a transport map $\mathcal{M}(\cdot)=\text{id}(\cdot)+\mathbf{M}(\cdot)\Delta t + \Delta \mathbf{N}_t$, it gives
	\begin{equation}\label{eq:2.5}
		\hat{\bx}_{k} = \mathcal{M}(\bx_{k-1}) \Leftrightarrow \pi_{\bx_k\mid \by_{1:k-1}} = \mathcal{M}_{\sharp} \pi_{\bx_{k-1}\mid \by_{1:k-1}},
	\end{equation}
	where operator $\mathcal{M}_{\sharp}$ denotes the push-forward operator, $\pi_{\bx_k\mid \by_{1:k-1}}$ and  $\pi_{\bx_{k-1}\mid \by_{1:k-1}}$ denote the probability measures associated with the densities $p_{\bx_k\mid \by_{1:k-1}}$ and $p_{\bx_{k-1}\mid \by_{1:k-1}}$, respectively. 
	
	Similarly, a transport map $\mathcal{T}$ can be designed to represent the analysis step. The associated push-forward operator $\mathcal{T}_{\sharp}$ satisfies 
	\begin{equation}\label{eq:2.6}
		\pi_{\bx_k\mid \by_{1:k}} = \mathcal{T}_{\sharp} \pi_{\bx_k\mid \by_{1:k-1}}. 
	\end{equation}
	Note that the transport map $\mathcal{T}$ depends explicitly on the current observation $\by_k$. Consequently, posterior samples can be generated by simply applying the map $\mathcal{T}$ to the prior ensemble $\{\hat{\bx}_k^{(i)}\}_{i=1}^N$. A schematic overview of the transport-based filter is provided in \Cref{fig:filter}.
	
	\begin{figure}[!htbp]
		\centering
		\includegraphics{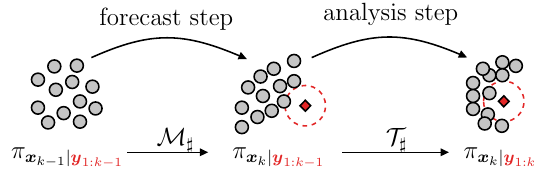}
		\caption{Propagation of particles in transport-based filtering}
		\label{fig:filter}
	\end{figure}
	
	Equation \eqref{eq:2.6} specifies the governing conditions that an admissible transport map $\mathcal{T}$ must be satisfied. To ensure that the transport measure $\mathcal{T}_{\sharp} \pi_{\bx_k\mid \by_{1:k-1}}$ is as close as possible to the truth posterior measure $\pi_{\bx_k\mid \by_{1:k}}$, a metric $d(\pi_1, \pi_2)$ of the discrepancy between two probability measures is required. The analysis step of filtering can be then written as an optimization problem
	\begin{equation}\label{eq:2.7}
		\mathcal{T} = \underset{\mathcal{T}}{\min }\ d(\pi_{\bx_k\mid \by_{1:k}}, \mathcal{T}_{\sharp} \pi_{\bx_k\mid \by_{1:k-1}}).
	\end{equation}
	
	The transport map  $\mathcal{T}$  can be formulated as the flow of ODEs or the composition of simple functions, such as neural networks. The discrepancy $d(\pi_1, \pi_2)$ between two measures is typically quantified using metrics like the KL divergence or the Wasserstein distance. In this paper, the distance metric $d$ in \eqref{eq:2.7} is specified as the Maximum Mean Discrepancy. A key advantage of MMD is able to measure the discrepancy between two distributions using only their empirical samples, without invoking any parametric assumptions or explicit knowledge of the underlying densities.
	
	\subsection{Maximum mean discrepancy}
	\label{sec:2.3}
	
	The MMD is a non-parametric statistical measure to quantify the difference between two probability distributions $p$ and $q$. Based on the definition of weak convergence of probability measures, we have the following lemma.
	\begin{lem}[Lemma 1 of \cite{grettonKernelMethodTwoSampleProblem2006}]
		Let $\Omega$ be a metric space, $p$ and $q$ be two probability measure defined on $\Omega$. Then $p=q$ if and only if $\mathbb{E}_{\bx \sim p}[f(\bx)] = \mathbb{E}_{\by \sim q}[f(\by)]$ for all $f\in \mathcal{C}(\Omega)$, where $\mathcal{C}(\Omega)$ is the space of bounded continuous function on $\Omega$. 
	\end{lem}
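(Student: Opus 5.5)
The forward implication is immediate. If $p=q$, then for every bounded continuous $f$ the integrals $\int f\,\rmd p$ and $\int f\,\rmd q$ coincide, since both are integrals of the same function against the same measure. All the work is in the converse.

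For the converse I will show that equality of expectations over $\mathcal{C}(\Omega)$ forces $p(F)=q(F)$ for every closed set $F\subseteq\Omega$. On a metric space, the closed sets form a $\pi$-system that generates the Borel $\sigma$-algebra. By Dynkin's $\pi$--$\lambda$ theorem, two probability measures that agree on the closed sets therefore agree on all Borel sets, which gives $p=q$.

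To obtain agreement on a closed set $F$, I approximate its indicator by bounded continuous functions built from the metric $d$. Put $d(\bx,F)=\inf_{\bz\in F} d(\bx,\bz)$; this function is $1$-Lipschitz, and it vanishes exactly on $F$ because $F$ is closed. For $n\ge 1$ define
\begin{equation*}
f_n(\bx)=\max\{0,\,1-n\,d(\bx,F)\}.
\end{equation*}
Each $f_n$ is continuous, takes values in $[0,1]$, equals $1$ on $F$, and decreases pointwise to $\mathds{1}_F$ as $n\to\infty$.

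By hypothesis $\int f_n\,\rmd p=\int f_n\,\rmd q$ for every $n$. The functions $f_n$ are dominated by the constant $1$, which is integrable against a probability measure, so the dominated convergence theorem (or monotone convergence for decreasing sequences of bounded functions) lets me pass to the limit on both sides. This yields $p(F)=q(F)$, and the $\pi$--$\lambda$ argument above completes the proof.

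The main point needing care is the pointwise convergence $f_n\to\mathds{1}_F$ for $\bx\notin F$. This is exactly where closedness enters: it guarantees $d(\bx,F)>0$ for such $\bx$, so $f_n(\bx)=0$ for all sufficiently large $n$. Everything else is standard measure theory, and the argument uses only the metric structure of $\Omega$ (no separability or compactness), in line with the lemma's hypotheses.
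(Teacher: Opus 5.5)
Your proof is correct. The paper gives no argument of its own for this lemma; it imports it from Gretton et al., who rest it on the textbook result in Dudley, \emph{Real Analysis and Probability}, Lemma 9.3.2. Your route is essentially that standard proof: the same functions $f_n(\bx)=\max\{0,1-n\,d(\bx,F)\}$ and dominated convergence give $p(F)=q(F)$ for every closed $F$. The only difference is the last step. The textbook concludes via closed regularity of Borel probability measures on a metric space, whereas you apply Dynkin's $\pi$--$\lambda$ theorem to the $\pi$-system of closed sets. Both finishes are standard and equally valid, and neither needs separability, as you note.

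Each $f_n$ is bounded and $n$-Lipschitz. Your argument therefore proves a stronger statement: equality of expectations over bounded Lipschitz functions already forces $p=q$, which is the form in which Dudley states it. The degenerate case $F=\emptyset$, where $d(\cdot,F)\equiv+\infty$, is trivial and can be set aside.
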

	MMD is defined as the largest possible difference between these expectations over the function class $\mathcal{F}$:
	$$
	\text{MMD}[p, q; \mathcal{F}] := \sup_{f \in \mathcal{F}} \left( \mathbb{E}_{\bx \sim p}[f(\bx)] - \mathbb{E}_{\by \sim q}[f(\by)] \right).
	$$
	When $\mathcal{F}$ is expressive enough (e.g., the space of all bounded continuous functions $\mathcal{C}_b$), MMD equals to zero if and only if $p = q$, making it a well-defined divergence measure.
	
	In practice, $\mathcal{F}$ is typically chosen as the unit ball in a RKHS $\mathcal{H}$ induced by a positive definite kernel $k: \Omega \times \Omega \to \mathbb{R}$:  
	$$
	\mathcal{F} = \{ f \in \mathcal{H} : \|f\|_{\mathcal{H}} \leq 1 \}.
	$$  
	Under this setting, the kernel MMD admits a closed-form expression:
	$$
	\begin{aligned}
		\text{MMD}^2[p, q; \mathcal{F}] &= \|\mathbb{E}_{p(\bx)}\left[k(\bx, \cdot)\right]-\mathbb{E}_{q(\by)}\left[k(\by, \cdot)\right]\|^2_{\mathcal{H}}\\
		&=\mathbb{E}_{\bx, \bx^{\prime} \sim p}[k(\bx, \bx^{\prime})] - 2\,\mathbb{E}_{\bx \sim p, \by \sim q}[k(\bx, \by)] + \mathbb{E}_{\by, \by^{\prime} \sim q}[k(\by, \by^{\prime})].
	\end{aligned}
	$$
	Given independent samples $\{\bx_i\}_{i=1}^N \sim p$ and $\{\by_j\}_{j=1}^N \sim q$, the empirical approximation of $\text{MMD}^2$ is:
	$$
	\widehat{\text{MMD}}^2[p, q; \mathcal{F}] = \frac{1}{N^2} \sum_{i=1}^{N}\sum_{j=1}^{N} k(\bx_i, \bx_{j}) - \frac{2}{N^2} \sum_{i=1}^{N}\sum_{j=1}^{N} k(\bx_i, \by_j) + \frac{1}{N^2} \sum_{i=1}^{N}\sum_{j=1}^{N} k(\by_i, \by_{j}).
	$$
	When using the MMD as the loss function, one can further enhance training performance by incorporating additional penalty terms, such as the maximum covariance discrepancy \cite{zhangMaximumMeanCovariance2020} and a variance penalty based on the reference distribution \cite{zengEnsembleTransportFilter2026}.
	
	\section{The coupling-informed transport filter}\label{sec:3}
	
	A central challenge in Bayesian filtering lies in the treatment of the normalization constant in Bayes' formula \eqref{eq:2.4} during the analysis step. We note that the joint distribution in the Bayesian probabilistic model encapsulates the complete information of the system. The prior and posterior distributions of the state variables can be conceptualized as distinct couplings. Utilizing the transport map framework, we implicitly construct a transformation from the prior to the posterior by matching their respective joint distributions.
	
	From Bayes rule, the joint distribution can be written as the following form,
	\begin{equation}\label{eq:3.1}
		p(\bx, \by) = p(\bx\mid \by)p(\by) = p(\by\mid \bx)p(\bx).
	\end{equation}
	Based on \eqref{eq:3.1}, we can construct three different couplings of the random variables $\bx$ and $\by$:  the posterior coupling $\pi_{\bx\mid \by}\otimes \pi_{\by}$, the likelihood coupling $\pi_{\by\mid \bx}\otimes \pi_{\bx}$, and indepednet coupling $\pi_{\bx}\otimes \pi_{\by}$.
	Simulating the system dynamics \eqref{eq:2.1} and the observation equation \eqref{eq:2.2} yields two copies of coupled variables, denoted by $(\bx, \by)$ and $(\bar{\bx}, \bar{\by})$, which give rise to the prior measure $\pi_{\bx}$ and the likelihood measure $\pi_{\by\mid \bx}$, respectively. 
	Consequently, the pair $({\bx}, \bar{\by})$, formed by taking ${\bx}$ from the pair $(\bx, \by)$ and $\bar{\by}$ from the other pair $(\bar{\bx}, \bar{\by})$, constitutes a pair of the independent coupling $\pi_{\bx}\otimes \pi_{\by}$. 
	
	The idea of this work is to construct a transport map $\mathcal{T}:\mathbb{R}^{n+m}\to \mathbb{R}^{n+m}$ that transforms independent coupling $\pi_{\bx}\otimes \pi_{\by}$ to the posterior coupling $\pi_{\bx\mid \by}\otimes \pi_{\by}$. Specifically, we constrain the transport map to be the block-triangular form,
	\begin{equation}\label{eq:3.2}
		\mathcal{T}({\bx}, \bar{\by}) = \left[
		\begin{matrix}
			\mathbf{T}({\bx}, \bar{\by})\\
			\bar{\by}
		\end{matrix}\right].
	\end{equation}
	In the construction of transport map $\mathcal{T}$, the first component $\mathbf{T}(\cdot, {\by})$ pushes forward prior measure $\pi_{\bx}$ to the posterior measure $\pi_{\bx\mid {\by}}$,  while the second component is the identity map that preserves the marginal measure $\pi_{\by}$. 
	
	Note that $\mathcal{T}_{\sharp}(\pi_{\bx}\otimes \pi_{\by})$ can be interpreted as an approximation to the posterior coupling $\pi_{\bx\mid \by}\otimes \pi_{\by}$. 
	Since both $p(\mathbf{x} \mid \mathbf{y}) p(\mathbf{y})$ and $p(\mathbf{y} \mid \mathbf{x}) p(\mathbf{x})$ represent the joint distributions $p(\mathbf{x}, \mathbf{y})$, we may use the likelihood coupling $\pi_{\by\mid \bx}\otimes \pi_{\bx}$ as the reference.
	A necessary condition for an admissible transport map $\mathcal{T}$ is that the measures $\pi^{\mathcal{T}}_1:=\mathcal{T}_{\sharp}(\pi_{\bx}\otimes \pi_{\by})$ and $\pi_2:=\pi_{\by\mid \bx}\otimes \pi_{\bx}$ must match each other. For simplicity, we denote $\mathbf{z}=(\bx, \by)$. The MMD is used to measure the dicrepancy,
	\begin{equation}\label{eq:3.3}
		\operatorname{MMD}[\pi^{\mathcal{T}}_1, \pi_2, \mathcal{F}] = \underset{f\in \mathcal{F}}{\sup}\left(\mathbb{E}_{\mathbf{z}\sim\pi^{\mathcal{T}}_1}[f(\mathbf{z})]-\mathbb{E}_{\mathbf{z}^{\prime}\sim\pi_2}[f(\mathbf{z}^{\prime})]\right),
	\end{equation}
	where the space $\mathcal{F}$ is the unit ball of the RKHS $\mathcal{H}$ with the radius basis function kernel $k(\mathbf{z}, \mathbf{z}^{\prime})=\exp\left(-{\|\mathbf{z}-\mathbf{z}^{\prime}\|^2}/{b_w^2}\right)$, $b_w$ is called the bandwidth. 
	
	\begin{thm}\label{thm:3.1}
		If the MMD with between $\mathcal{T}_{\sharp} (\pi_{\bx}\otimes \pi_{\by})$ and $\pi_{\by\mid \bx}\otimes \pi_{\bx}$ is zero and the transport map $\mathcal{T}$ is constructed by the block-triangular form as \eqref{eq:3.2}, then the component transport map $\mathbf{T}$ in $\mathcal{T}$ push forward the prior $\pi_{\bx}$ toward the posterior $\pi_{\bx\mid \by}$.
	\end{thm}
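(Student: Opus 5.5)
The plan has two steps. First, turn the vanishing MMD into equality of the two joint measures. Second, disintegrate both measures along the $\by$-marginal and compare their conditional laws given $\by$.

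For the first step, the key fact is that the Gaussian RBF kernel $k(\mathbf{z},\mathbf{z}')=\exp(-\|\mathbf{z}-\mathbf{z}'\|^2/b_w^2)$ on $\mathbb{R}^{n+m}$ is characteristic. This means the kernel mean embedding $\pi\mapsto \mathbb{E}_{\pi}[k(\mathbf{z},\cdot)]$ is injective on Borel probability measures; equivalently, the RKHS $\mathcal{H}$ is dense in $\mathcal{C}_0(\mathbb{R}^{n+m})$, the continuous functions vanishing at infinity (a classical universality property of Gaussian kernels). From the closed form of $\mathrm{MMD}^2$ in Section~\ref{sec:2.3}, $\operatorname{MMD}[\pi_1^{\mathcal{T}},\pi_2,\mathcal{F}]=0$ gives $\mathbb{E}_{\pi_1^{\mathcal{T}}}[f]=\mathbb{E}_{\pi_2}[f]$ for every $f\in\mathcal{H}$. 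Density then extends this equality to every $f\in\mathcal{C}_0$. Since probability measures on $\mathbb{R}^{n+m}$ are tight, it extends further to every bounded continuous $f$, for instance by multiplying $f$ by cutoff functions. The lemma quoted from \cite{grettonKernelMethodTwoSampleProblem2006} in Section~\ref{sec:2.3} then yields $\mathcal{T}_\sharp(\pi_{\bx}\otimes\pi_{\by})=\pi_{\by\mid\bx}\otimes\pi_{\bx}$. By the identity \eqref{eq:3.1}, the right-hand side is the joint law $\pi_{\bx,\by}$, which also equals $\pi_{\bx\mid\by}\otimes\pi_{\by}$.

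For the second step, I use the block-triangular form \eqref{eq:3.2}. For any bounded measurable $g(\bx,\by)$,
\[
\int g\,\rmd\,\mathcal{T}_\sharp(\pi_{\bx}\otimes\pi_{\by})=\int\!\!\int g(\mathbf{T}(\bx,\by),\by)\,\pi_{\bx}(\rmd\bx)\,\pi_{\by}(\rmd\by)=\int\Big(\int g\,\rmd\,\mathbf{T}(\cdot,\by)_\sharp\pi_{\bx}\Big)\pi_{\by}(\rmd\by).
\]
So the $\by$-marginal of the pushed-forward measure is $\pi_{\by}$, and its disintegration along $\by$ is $\by\mapsto\mathbf{T}(\cdot,\by)_\sharp\pi_{\bx}$. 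The true joint has the same $\by$-marginal, with disintegration $\by\mapsto\pi_{\bx\mid\by}$. On the Polish space $\mathbb{R}^{n+m}$, disintegrations with respect to a fixed marginal are unique up to a null set of that marginal. Hence $\mathbf{T}(\cdot,\by)_\sharp\pi_{\bx}=\pi_{\bx\mid\by}$ for $\pi_{\by}$-almost every $\by$, which is the assertion of the theorem. This requires $\mathbf{T}$ to be jointly measurable, which I would state as a standing assumption.

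The main obstacle is the first step. The paper's lemma is phrased for all bounded continuous functions, while the MMD only controls the unit ball of $\mathcal{H}$. The gap is bridged by the characteristic/universal property of the Gaussian kernel, which I would cite rather than reprove, adding the short density-plus-tightness argument above. A secondary subtlety is that the conclusion can only hold for $\pi_{\by}$-a.e.\ observation. The statement should be read in that sense, or upgraded to every $\by$ under continuity assumptions on $\mathbf{T}$ and on the conditional densities.
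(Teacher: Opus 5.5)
Your proposal is correct and follows essentially the same route as the paper: vanishing MMD plus universality of the Gaussian kernel gives equality of the two joint laws, and the block-triangular structure then identifies $\mathbf{T}(\cdot,\by)_{\sharp}\pi_{\bx}$ with the conditional law given $\by$. The paper does this last step by hand, using product test functions $f_1(\bx)f_2(\by)$ and conditional expectations, where you invoke uniqueness of disintegrations. Your version is also tighter on two points the paper glosses over: the Gaussian RKHS is dense in $\mathcal{C}_0$, not in $\mathcal{C}_b$ under the sup norm, which your cutoff/tightness step repairs; and the paper's $\pi_{\by}$-null exceptional set depends on $f_1$, which disintegration uniqueness (via a countable determining class) handles automatically.
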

	\begin{proof}
		Since the MMD between $\mathcal{T}_{\sharp} (P(\bx)\otimes P(\by))$ and $P(\bx, \by)$ is zero, then
		\begin{equation}\label{eq:3.4}
			\mathbb{E}_{(\bar{\bx}, \bar{\by})\sim \pi_{\by\mid \bx}\otimes \pi_{\bx}}\left[f(\bar{\bx}, \bar{\by})\right] = \mathbb{E}_{({\bx}, \bar{\by})\sim \pi_{\bx}\otimes \pi_{\by}}\left[f(\mathbf{T}({\bx}, \bar{\by}), \bar{\by})\right]
		\end{equation}
		hold true for all $f\in \mathcal{H}$. According to the universal approximation property of an universal kernel, the RKHS $\mathcal{H}$ is dense in the space of continuous bounded functions $\mathcal{C}_b$. Then, Equation \eqref{eq:3.4} hold true for all $f\in \mathcal{C}_b$. Taking arbitrary continuous bounded functions $f_1(\bx)$ and $f_2(\by)$, then $f(\bx, \by)=f_1(\bx)f_2(\by) \in \mathcal{C}_b$. It gives
		\begin{equation*}
			\mathbb{E}_{(\bar{\bx}, \bar{\by})\sim \pi_{\by\mid \bx}\otimes \pi_{\bx}}\left[f_1(\bar{\bx})f_2(\bar{\by})\right] = \mathbb{E}_{({\bx}, \bar{\by})\sim \pi_{\bx}\otimes \pi_{\by}}\left[f_1(\mathbf{T}({\bx}, \bar{\by}))f_2(\bar{\by})\right].
		\end{equation*}
		From the projection definition of conditional expectation,
		\begin{equation*}
			\begin{aligned}
				\mathbb{E}\left[f_1(\bar{\bx})f_2(\bar{\by})\right] &= \mathbb{E}\left[\mathbb{E}\left[f_1(\bar{\bx})\mid \bar{\by}\right]f_2(\bar{\by})\right],\\
				\mathbb{E}\left[f_1(\mathbf{T}({\bx}, \bar{\by}))f_2(\bar{\by})\right] &= \mathbb{E}\left[\mathbb{E}\left[f_1(\mathbf{T}({\bx}, \bar{\by}))\mid \bar{\by}\right]f_2(\bar{\by})\right],
			\end{aligned}
		\end{equation*}
		it follows that
		\begin{equation*}
			\mathbb{E}\left[f_1(\bar{\bx})\mid \bar{\by}\right] = \mathbb{E}\left[f_1(\mathbf{T}({\bx}, \bar{\by}))\mid \bar{\by}\right], a.e.-\pi_{\by}.
		\end{equation*}
		Because $({\bx}, \bar{\by})\sim \pi_{\bx}\otimes \pi_{\by}$, and ${\bx}$ is independent of $\bar{\by}$, then for all $f_1\in \mathcal{C}_b$
		\begin{equation*}
			\mathbb{E}\left[f_1(\bar{\bx})\mid \bar{\by}\right] = \mathbb{E}\left[f_1(\mathbf{T}({\bx}, \bar{\by}))\right], a.e.-\pi_{\by}.
		\end{equation*}
		Thus $\mathbf{T}(\cdot, \by)_{\sharp}\pi_{\bx} = \pi_{\bx\mid \by}$.
	\end{proof}
	
	\Cref{thm:3.1} shows that, when using the block triangular structure defined in \eqref{eq:3.2} and minimizing the MMD, the component map $\mathbf{T}(\cdot, \by)$ pushes forward the prior measure $\pi_{\bx}$ to the posterior measure $\pi_{\bx\mid \by}$, that is, $\mathcal{T}$ essentially constructs the posterior coupling $\pi_{\bx\mid \by}\otimes\pi_{\by}$.
	Define a function space $\mathcal{S}$ of the transport map, then we seek transport map by minimizing the MMD loss. Then the problem is equivalent to the following optimization problem,
	\begin{equation}\label{eq:3.5}
		\mathcal{T}^* = \underset{\mathcal{T}\in \mathcal{S}}{\min}\ \underset{f\in \mathcal{F}}{\sup}\left(\mathbb{E}_{\mathbf{z}\sim\pi_1^{\mathcal{T}}}[f(\mathbf{z})]-\mathbb{E}_{\mathbf{z}^{\prime}\sim\pi_2}[f(\mathbf{z}^{\prime})]\right).
	\end{equation}
	Al-Jarrah et al., \cite{al-jarrahNonlinearFilteringBrenier2025}, proposed an optimization problem similar to \eqref{eq:3.5}. Here, we extend it to a MMD-based formulation for easy computation. The min-max problem can be reformulated with the reproducing property of RKHS, which gives
	\begin{equation}\label{eq:3.6}
		\operatorname{MMD}^2[\pi_1^{\mathcal{T}}, \pi_2, \mathcal{F}] = \mathbb{E}_{\tilde{\bz}, \tilde{\bz}^{\prime}\sim \pi_1^{\mathcal{T}}}[k(\tilde{\bz}, \tilde{\bz}^{\prime})] - 2\mathbb{E}_{\tilde{\bz}\sim\pi_1^{\mathcal{T}}, \bar{\bz}\sim \pi_2}[k(\tilde{\bz}, \bar{\bz})] + \mathbb{E}_{\bar{\bz}, \bar{\bz}^{\prime}\sim \pi_2}[k(\bar{\bz}, \bar{\bz}^{\prime})].
	\end{equation}
	
	\subsection{Empirical approximation of transport maps}\label{sec:3.1}
	
	Suppose the two copies $\mathbf{Z} = (\bX, \bY)$ and $\bar{\mathbf{Z}} = (\bar{\bX}, \bar{\bY})$ of $N$ coupled samples of the joint distribution $P(\bx, \by)$ are generated via nonlinear system \eqref{eq:2.1}-\eqref{eq:2.2}, and then an independent copy $\tilde{\mathbf{Z}} = ({\bX}, \bar{\bY})$ of random variables $\bx$ and $\by$ can be constructed,
	\begin{equation*}
		\begin{aligned}
			\mathbf{Z} = (\bX, \bY) &=\{\bx^i, \by^i\}_{i=1}^N := \{\mathbf{z}^i\}_{i=1}^N,\\
			\bar{\mathbf{Z}} = (\bar{\bX}, \bar{\bY}) &=\{\bar{\bx}^i, \bar{\by}^i\}_{i=1}^N := \{\bar{\mathbf{z}}^i\}_{i=1}^N,\\
			\tilde{\mathbf{Z}} = ({\bX}, \bar{\bY}) &= \{{\bx}^i, \bar{\by}^i\}_{i=1}^N := \{\tilde{\mathbf{z}}^i\}_{i=1}^N.
		\end{aligned}
	\end{equation*}
	In this work, we consider constructing the transport map~$\mathcal{T}$ using neural networks, which enables efficient training and inference. 
	As shown in \Cref{fig:transport-structure}, we first employ a prior encoder and an observation encoder to extract key features from the prior particles and observations, respectively. Then we use a coupling network to integrate the information from both sources.
	\begin{figure}[!htbp]
		\centering
		\includegraphics[width=.8\textwidth]{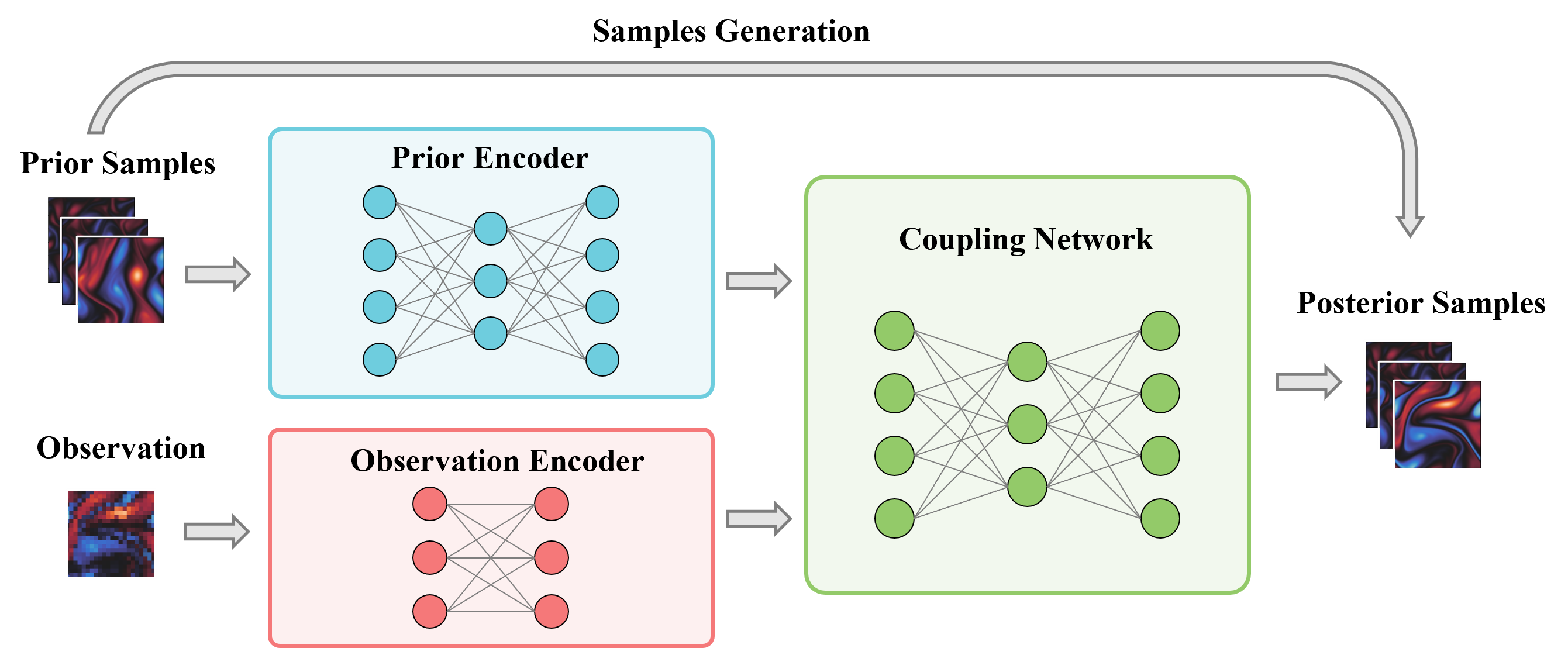}
		\caption{The network architecture for the component map $T$ of the transport map $\mathcal{T}$: prior particles and observations are first preprocessed through two independent encoders, and then fed into a coupling network for posterior particles.}
		\label{fig:transport-structure}
	\end{figure}
	For simplicity, we can construct a nudging-like transport map $\mathcal{T}^{\theta}$ with the form of
	\begin{equation}\label{eq:3.7}
		\mathcal{T}^{\theta}({\bX}, \bar{\bY}) = \left[
		\begin{matrix}
			{\bX} + \mathbf{T}^{\theta}(\bar{\bY}-\mathbf{H}({\bX}))\\
			\bar{\bY}
		\end{matrix}\right],
	\end{equation}
	where $\mathbf{T}^{\theta}$ is defined by a neural network with parameters $\theta$. Then the transport map $\mathbf{T}$ is obtained by minimizing the MMD in \eqref{eq:3.6},
	\begin{equation}\label{eq:3.8}
		\mathcal{T}^{\theta^*} = \underset{\mathcal{T}^{\theta}}{\arg\min} \operatorname{MMD}^2[\mathcal{T}^{\theta}_{\sharp} (\pi_{\bx}\otimes \pi_{\by}), \pi_{\bx, \by}, \mathcal{F}]. 
	\end{equation}
	To numerically solve the optimization problem \eqref{eq:3.8}, the objective function \eqref{eq:3.6} can be approximated with the two copies of samples $(\bX, \bY)$ and $(\bar{\bX}, \bar{\bY})$, 
	\begin{equation}\label{eq:3.9}
		\begin{aligned}
			Loss \approx& \widehat{\operatorname{MMD}}^2[\mathcal{T}^{\theta}_{\sharp} (\pi_{\bx}\otimes \pi_{\by}), \pi_{\by\mid \bx}\otimes \pi_{\bx}; \mathcal{F}]\\
			=& \frac{1}{N^2}\sum_{i=1}^{N}\sum_{j=1}^{N}k(\mathcal{T}^{\theta}(\tilde{\mathbf{z}}^i), \mathcal{T}^{\theta}(\tilde{\mathbf{z}}^j))-\frac{2}{N^2}\sum_{i=1}^{N}\sum_{j=1}^{N}k(\mathcal{T}^{\theta}(\tilde{\mathbf{z}}^i), {\mathbf{z}}^j) \\
			&+ \frac{1}{N^2}\sum_{i=1}^{N}\sum_{j=1}^{N}k({\mathbf{z}}^i, {\mathbf{z}}^j). 
		\end{aligned}
	\end{equation}
	By minimizing the approximate objective function \eqref{eq:3.9}, we obtain the transport map $\mathcal{T}^*$. 
	Then we can sample from the posterior distribution by fixing $\mathbf{y} = \mathbf{y}^o$,
	\begin{equation}
		\bx^{post} =  T^{\theta}(\bx, \by^{o}),
	\end{equation}
	where $\mathbf{y}^o$ is the value of observation. \Cref{alg:1} presents the pseudocode for coupling-informed transport filter method. 
	
	\begin{algorithm}
		\renewcommand{\algorithmicrequire}{\textbf{Input:}}
		\renewcommand{\algorithmicensure}{\textbf{Output:}}
		\caption{Coupling-informed transport filter(TFCP)}
		\begin{algorithmic}[1]
			\REQUIRE Given ensemble $\{\bx_{k-1}^{(i)}\}_{i=1}^N$ at previous time step $k-1$, current observation $\by^o_k$, discrete dynamical model $\mathbf{M}_{k}(\cdot)$, observation operator $\mathbf{H}_k(\cdot)$, model uncertainty $\pi(\boldsymbol{\eta}_{k})$;
			\FOR {$i=1,\cdots, N$} 
			\STATE $\hat{\bx}_{k}^{(i)} = \mathbf{M}_{k}(\bx_{k-1}^{(i)}) + \boldsymbol{\eta}_{k}^{(i)}$; 
			\STATE $\hat{\by}_{k}^{(i)} = \mathbf{H}_{k}(\hat{\bx}_{k}^{(i)}) + \boldsymbol{\varepsilon}_k$
			\ENDFOR 
			\STATE Randomly shuffle the forecast ensemble to generate $(\bX, \bY)$ and $({\bX}, \bar{\bY})$;
			\STATE Initialize the parameter $\theta$ of transport map $\mathcal{T}^{\theta}$ defined as \eqref{eq:3.7};
			\REPEAT
			\FOR{$i=1,\cdots, N$}
			\STATE $\hat{\bx}_{k}^{(i)} = {T}^{\theta}(\hat{\bx}_{k}^{(i)}, \bar{\by}_{k}^{(i)})$;
			\ENDFOR
			\STATE Calculate $\operatorname{MMD}^2$ loss by \eqref{eq:3.9};
			\STATE Update the parameter $\theta$ by minimizing the $\operatorname{MMD}^2$ loss.
			\UNTIL{Stop when criterion met}
			\FOR{$i=1,\cdots, N$}
			\STATE ${\bx}_{k}^{(i)} = \hat{\bx}_{k}^{(i)} + {T}^{*}({\by}_{k}^{o}-\mathbf{H}_k(\hat{\bx}_{k}^{(i)}))$;
			\ENDFOR
			\ENSURE Approximated posterior ensemble $\{{\bx}_{k}^{(i)}\}_{i=1}^N$;
		\end{algorithmic}
		\label{alg:1}
	\end{algorithm}
	
	{
		Assume that $\mathcal{H}$ is the reproducing kernel Hilbert space associated with a Gaussian kernel $k(\bz, \bz^{\prime})=\exp(-\|\bz-\bz^{\prime}\|^2/b_w^2)$, where $\bz=(\bx, \by)$ and $b_w$ is the bandwidth. Assume that the loss function \eqref{eq:3.9} is sufficiently minimized to a global optimum. Then a convergence of the approximate posterior distribution to the true posterior distribution is presented as follow. 
		\begin{thm}
			Assume the transport map defined by $\mathcal{T}^{\theta}(\bx, \bar{\by})=\left[\mathbf{T}^{\theta}(\bx, \bar{\by}), \bar{\by}\right]^{\top}$ in \eqref{eq:3.2} and denote $P_{\mathcal{T}^{\theta}}=\mathcal{T}^{\theta}_{\sharp}(\pi_{\bx}\otimes \pi_{\by})$ and $Q=\pi_{\bx, \by}$. When the kernel of MMD is Gaussian, for any $\delta>0$, with probability at least $1-\delta$, the following inequality holds 
			\begin{equation}\label{eq:3.11}
				\begin{aligned}
					\mathbb{E}_{p(\by)}{\operatorname{MMD}}\Big[\mathbf{T}^{\theta}(\cdot, \by)_{\sharp}P(\bx),P(\bx\mid \by);\mathcal{F}\Big] \le& \frac{\sqrt{2}}{b_w}\mathbb{E}_{p(\bx)p(\by)}\Big[\|\mathbf{T}^{\theta^{*}}(\bx, \by)-\mathbf{T}^{*}(\bx, \by)\|\Big]\\
					&+ 2\left(1+\sqrt{\ln\frac{1}{\delta}}\right)\sqrt{\frac{1}{N}},
				\end{aligned}
			\end{equation}
			where $\mathcal{T}^*$ is the truth transport that satisfy $\mathcal{T}^{{*}}_{\sharp}(\pi_{\bx}\otimes \pi_{\by})=\pi_{\bx, \by}$. 
		\end{thm}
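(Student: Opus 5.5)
The plan is to split the bound into two pieces: a deterministic term that compares the learned map $\mathbf{T}^{\theta^*}$ with the true map $\mathbf{T}^*$, and a statistical term coming from the empirical MMD being minimized. Throughout, write $\phi(\bz)=k(\bz,\cdot)$ for the feature map. Fix $\by$ and set $P^\theta_{\by}=\mathbf{T}^{\theta^*}(\cdot,\by)_\sharp P(\bx)$. By \Cref{thm:3.1}, applied to $\mathcal{T}^*$, we have $\mathbf{T}^*(\cdot,\by)_\sharp P(\bx)=P(\bx\mid\by)$ for a.e. $\by$. The conditional MMD then becomes an RKHS norm of a difference of mean embeddings:
\begin{equation*}
\operatorname{MMD}\big[P^\theta_{\by},P(\bx\mid\by)\big]=\Big\|\mathbb{E}_{p(\bx)}\big[\phi(\mathbf{T}^{\theta^*}(\bx,\by))-\phi(\mathbf{T}^*(\bx,\by))\big]\Big\|_{\mathcal{H}}.
\end{equation*}
Here the kernel is taken on the state variable, or equivalently on the joint kernel with the $\by$-coordinate frozen, since the Gaussian kernel factorizes.

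The first key step bounds this by Jensen's inequality and a Lipschitz estimate of the feature map. For the Gaussian kernel,
\begin{equation*}
\|\phi(\bx)-\phi(\bx')\|_{\mathcal{H}}^2=2-2e^{-\|\bx-\bx'\|^2/b_w^2}\le \frac{2\|\bx-\bx'\|^2}{b_w^2},
\end{equation*}
using $1-e^{-t}\le t$. Hence the integrand is at most $\frac{\sqrt2}{b_w}\|\mathbf{T}^{\theta^*}-\mathbf{T}^*\|$. Taking $\mathbb{E}_{p(\by)}$ gives exactly the first term of \eqref{eq:3.11}.

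The second step accounts for the sample-based optimization, and it explains where the probabilistic term enters. The natural route is the triangle inequality
\begin{equation*}
\operatorname{MMD}(P_{\mathcal{T}^{\theta^*}},Q)\le \widehat{\operatorname{MMD}}(P_{\mathcal{T}^{\theta^*}},Q)+\big|\operatorname{MMD}-\widehat{\operatorname{MMD}}\big|.
\end{equation*}
One then controls the deviation uniformly through the standard concentration of empirical mean embeddings. Since $k\le 1$, we have $\mathbb{E}\|\frac1N\sum_i\phi(\bz^i)-\mathbb{E}\phi\|_{\mathcal{H}}\le 1/\sqrt N$. McDiarmid's inequality, with bounded differences $2/N$, adds $\sqrt{2\ln(1/\delta)/N}$. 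Combining the two samples (the pushed-forward independent coupling and the reference $\bz^j$) yields a term of the form $2(1+\sqrt{\ln(1/\delta)})/\sqrt N$ after collecting constants.

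To place this inside the stated inequality, I would bound the conditional MMD by the triangle inequality through the empirical measures of $\mathbf{T}^{\theta^*}(\bx^i,\by)$ and $\mathbf{T}^{*}(\bx^i,\by)$. The first concentration deviation is between $P^\theta_{\by}$ and its empirical version, the middle term is handled by the Lipschitz bound, and the last deviation is between the empirical version of $\mathbf{T}^*$ and $P(\bx\mid\by)$. Each deviation is bounded by the McDiarmid estimate above.

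The main obstacle is this last step. The Lipschitz term naturally appears with empirical averages $\frac1N\sum_i\|\mathbf{T}^{\theta^*}(\bx^i,\by)-\mathbf{T}^*(\bx^i,\by)\|$, while \eqref{eq:3.11} has population expectations. Moreover, $\theta^*$ depends on the samples, so the concentration must hold uniformly over the map class or be handled by a separate held-out sample. I would first try to apply concentration directly to $P^\theta_{\by}$ versus $P(\bx\mid\by)$ via the population identity of the first step. With that, only the embedding deviation of the two empirical joint measures needs to be controlled, and the global-optimality assumption on \eqref{eq:3.9} absorbs the empirical MMD itself, leaving the deviation terms as the $\mathcal{O}(1/\sqrt N)$ contribution.
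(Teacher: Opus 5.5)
You have a genuine gap, and it sits in the part of the theorem that involves the sample size. Your Step 1 is correct. From $\mathbf{T}^*(\cdot,\by)_\sharp P(\bx)=P(\bx\mid\by)$, Jensen, and $\|k(\bx,\cdot)-k(\bx',\cdot)\|_{\mathcal{H}}^2=2-2k(\bx,\bx')\le 2\|\bx-\bx'\|^2/b_w^2$, it bounds $\mathbb{E}_{p(\by)}\operatorname{MMD}[\mathbf{T}^{\theta^*}(\cdot,\by)_\sharp P(\bx),P(\bx\mid\by)]$ by the first term of \eqref{eq:3.11}.

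But that is not the left-hand side. In the paper, $\mathbf{T}^{\theta}$ is the map obtained by minimizing the empirical loss \eqref{eq:3.9}, so it is random; this is what ``with probability at least $1-\delta$'' refers to. By contrast, $\mathbf{T}^{\theta^*}$ is the population minimizer of \eqref{eq:3.8}. Your definition $P^\theta_{\by}=\mathbf{T}^{\theta^*}(\cdot,\by)_\sharp P(\bx)$ silently merges the two. If they coincided, Step 1 alone would give \eqref{eq:3.11} with probability one, and the $N^{-1/2}$ term would be pure slack.

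What is missing is an $\mathcal{O}(N^{-1/2})$ bound on how far the trained map's conditionals $\mathbf{T}^{\theta}(\cdot,\by)_\sharp P(\bx)$ are from those of $\mathbf{T}^{\theta^*}$, or from $P(\bx\mid\by)$. Steps 2--3 do not supply it.

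Step 2 controls only a joint MMD, and a joint bound cannot be pushed down to the averaged conditional MMD. Write $m_{\by}=\mathbb{E}_{p(\bx)}[k_{\bx}(\mathbf{T}^{\theta}(\bx,\by),\cdot)]-\mathbb{E}_{p(\bx\mid\by)}[k_{\bx}(\bx,\cdot)]$, and use the factorization $k=k_{\bx}k_{\by}$ with $0<k_{\by}\le 1$. Then
\begin{equation*}
\operatorname{MMD}^2[P_{\mathcal{T}^{\theta}},Q,\mathcal{F}]=\mathbb{E}_{\by,\by'}\big[k_{\by}(\by,\by')\langle m_{\by},m_{\by'}\rangle_{\mathcal{H}_{\bx}}\big]\le\big(\mathbb{E}_{p(\by)}\|m_{\by}\|_{\mathcal{H}_{\bx}}\big)^2 .
\end{equation*}
So the joint MMD is the \emph{smaller} quantity. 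The inequality is strict, for instance, when $m_{\by}$ is a nonzero constant and $\by$ is nondegenerate.

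Step 3 merely re-derives Step 1 with two extra deviation terms and never uses the optimality of the trained parameter. As you note yourself, its fixed-$\by$ concentration for a data-dependent map also needs a uniform bound over the map class, which you do not have.

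The paper bridges $\mathbf{T}^{\theta}$ and $\mathbf{T}^{\theta^*}$ at the joint level. It asserts $\mathbb{E}_{p(\by)}\operatorname{MMD}[\mathbf{T}^{\theta}(\cdot,\by)_\sharp P(\bx),P(\bx\mid\by)]\le\operatorname{MMD}[P_{\mathcal{T}^{\theta}},Q,\mathcal{F}]$ via product test functions $f_1(\bx)f_2(\by)$. It then splits $\operatorname{MMD}[P_{\mathcal{T}^{\theta}},Q]\le\operatorname{MMD}[P_{\mathcal{T}^{\theta^*}},Q]+\operatorname{MMD}[P_{\mathcal{T}^{\theta}},P_{\mathcal{T}^{\theta^*}}]$. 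The first piece is bounded by the same Lipschitz estimate you use. The second is bounded by Theorem 14 of \cite{grettonKernelMethodTwoSampleProblem2006} together with the assumption $\widehat{\operatorname{MMD}}[P_{\mathcal{T}^{\theta}},P_{\mathcal{T}^{\theta^*}},\mathcal{F}]=0$.

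Do not patch your gap with that first reduction. It treats $\sup_{f_2}f_2(\by)=1$ as if a single $f_2$ attained it for every $\by$. It also exchanges $\sup_{f_1}$ with $\mathbb{E}_{p(\by)}$ in the direction that yields an upper bound, not a lower one. The display above shows the asserted inequality is reversed in general. Your concern about uniformity applies equally to the paper's use of a fixed-distribution concentration bound for the data-dependent $\theta$.

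Your conditional-level Step 1 is in fact the sound way to obtain the approximation term. A correct completion would stay at that level. With $\mu^{\theta}_{\by}=\mathbf{T}^{\theta}(\cdot,\by)_\sharp P(\bx)$, use
\begin{equation*}
\operatorname{MMD}[\mu^{\theta}_{\by},P(\bx\mid\by)]\le\operatorname{MMD}[\mu^{\theta}_{\by},\mu^{\theta^*}_{\by}]+\operatorname{MMD}[\mu^{\theta^*}_{\by},P(\bx\mid\by)] .
\end{equation*}
Treat the second term by Step 1. The first term needs a generalization or stability guarantee for the empirical minimizer, which is an extra hypothesis beyond what the statement provides.
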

		\begin{proof}
			The RKHS $\mathcal{H}$ associated with a Gaussian kernel admits a decomposition as a tensor product of two RKHS $\mathcal{H}=\mathcal{H}_{\bx}\otimes \mathcal{H}_{\by}$, where both $\mathcal{H}_{\bx}$ and $\mathcal{H}_{\by}$ have Gaussian kernel with bandwidth $b_w$. For any $f_1\in \mathcal{F}_{\bx}$ and $f_2 = \mathcal{F}_{\by}$, the product $f(\bx, \by)=f_1(\bx)f_2(\by)\in \mathcal{F}$. Then
			\begin{equation*}
				\begin{aligned}
					{\operatorname{MMD}}[P_{\mathcal{T}^{\theta}}, Q, \mathcal{F}] &= \sup_{f \in \mathcal{F}} \Big\{\mathbb{E}_{p(\bx)p(\by)}\left[f(\mathbf{T}^{\theta}(\bx, \by),\by)\right]-\mathbb{E}_{p(\bx,\by)}\left[f(\bx, \by)\right]\Big\}\\
					&\ge \sup_{\substack{f_1\in \mathcal{F}_{\bx},\\ f_2\in \mathcal{F}_{\by}}}\Big\{\mathbb{E}_{p(\bx)p(\by)} f_1(\mathbf{T}^{\theta}(\bx, \by))f_2(\by)-\mathbb{E}_{p(\bx, \by)} f_1(\bx)f_2(\by)\Big\}\\
					&=\sup_{\substack{f_1\in \mathcal{F}_{\bx},\\ f_2\in \mathcal{F}_{\by}}}\Big\{\mathbb{E}_{p(\by)}\left[\left(\mathbb{E}_{p(\bx)}\left[f_1(\mathbf{T}^{\theta}(\bx, \by))\right]-\mathbb{E}_{p(\bx\mid \by)}\left[f_1(\bx)\right]\right)f_2(\by)\right]\Big\}.
				\end{aligned}
			\end{equation*} 
			According to the reproducing property of RKHS and the kernel of $\mathcal{H}_{\by}$ is Gaussian, we have $\sup_{f_2\in \mathcal{F}_{\by}}f_2(\by)=\|k(\by, \cdot)\|_{\mathcal{H}_y}=1$. Thus
			\begin{equation*}
				\begin{aligned}
					{\operatorname{MMD}}&[P_{\mathcal{T}^{\theta}}, Q, \mathcal{F}] \\
					&\ge \sup_{f_1\in \mathcal{F}_{\bx}}\left\{\mathbb{E}_{p(\by)}\left[\Big(\mathbb{E}_{p(\bx)}\left[f_1(\mathbf{T}^{\theta}(\bx, \by))\right]-\mathbb{E}_{p(\bx\mid \by)}\left[f_1(\bx)\right]\Big)\sup_{f_2\in \mathcal{F}_{\by}}f_2(\by)\right]\right\}\\
					&\ge \mathbb{E}_{p(\by)}\left[\sup_{f_1\in \mathcal{F}_{\bx}}\Big\{\mathbb{E}_{p(\bx)}\left[f_1(\mathbf{T}^{\theta}(\bx, \by))\right]-\mathbb{E}_{p(\bx\mid \by)}\left[f_1(\bx)\right]\Big\}\right]\\
					&=\mathbb{E}_{p(\by)}\Big[\operatorname{MMD}\left[\mathbf{T}^{\theta}(\cdot, \by)_{\sharp}P(\bx),P(\bx\mid \by);\mathcal{F}_x\right]\Big].
				\end{aligned}
			\end{equation*}
			Applying the triangle inequality yields
			\begin{equation}\label{eq:3.12}
				\begin{aligned}
					\mathbb{E}_{p(\by)}\big[\operatorname{MMD}[\mathbf{T}^{\theta}(\cdot, \by)_{\sharp}P(\bx),&P(\bx\mid \by);\mathcal{F}_x]\big] \le {\operatorname{MMD}}[P_{\mathcal{T}^{\theta}}, Q, \mathcal{F}]\\
					&\le {\operatorname{MMD}}[P_{\mathcal{T}^{\theta^{*}}}, Q, \mathcal{F}] + {\operatorname{MMD}}[P_{\mathcal{T}^{\theta}}, P_{\mathcal{T}^{\theta^{*}}}, \mathcal{F}].
				\end{aligned}
			\end{equation}
			Since the kernel of MMD is chosen to be Gaussian, then $0 < k(\bz, \bz^{\prime}) \le 1$. According to the Theorem 14 of \cite{grettonKernelMethodTwoSampleProblem2006}, the convergence rate of empirical approximation \eqref{eq:3.7} is
			\begin{equation}\label{eq:3.13}
				\operatorname{Pr}\left(\left|{\operatorname{MMD}}[P_{\mathcal{T}^{\theta}}, P_{\mathcal{T}^{\theta^{*}}}, \mathcal{F}]-\widehat{\operatorname{MMD}}[P_{\mathcal{T}^{\theta}}, P_{\mathcal{T}^{\theta^{*}}}, \mathcal{F}]\right|>2\sqrt{\frac{1}{N}}+\epsilon\right)\le\exp\left(-\frac{\epsilon^2 N}{4}\right).
			\end{equation} 
			Equation \eqref{eq:3.13} means that,
			\begin{equation}\label{eq:3.14}
				\left|{\operatorname{MMD}}[P_{\mathcal{T}^{\theta}}, P_{\mathcal{T}^{\theta^{*}}}, \mathcal{F}]-\widehat{\operatorname{MMD}}[P_{\mathcal{T}^{\theta}}, P_{\mathcal{T}^{\theta^{*}}}, \mathcal{F}]\right| \le  2\left(1+\sqrt{\ln\frac{1}{\delta}}\right)\sqrt{\frac{1}{N}}
			\end{equation}
			hold for any $\delta>0$, with probability at least $1-\delta$. Since we assume that the empirical loss function MMD has been fully optimized, i.e., $\widehat{\operatorname{MMD}}[P_{\mathcal{T}^{\theta}}, P_{\mathcal{T}^{\theta^{*}}}, \mathcal{F}]=0$ , it follows that
			\begin{equation}\label{eq:3.15}
				{\operatorname{MMD}}[P_{\mathcal{T}^{\theta}}, P_{\mathcal{T}^{\theta^{*}}}, \mathcal{F}] \le 2\left(1+\sqrt{\ln\frac{1}{\delta}}\right)\sqrt{\frac{1}{N}}.
			\end{equation}
			By the Lipschitz continuity of the Gaussian kernel function, we have
			\begin{equation}\label{eq:3.16}
				\begin{aligned}
					{\operatorname{MMD}}[P_{\mathcal{T}^{\theta^{*}}}, Q, \mathcal{F}] &= \left\|\mathbb{E}_{p(\bx)p(\by)}\left[k(\mathcal{T}^{\theta^{*}}(\bz), \cdot)\right] - \mathbb{E}_{p(\bx)p(\by)}\left[k(\mathcal{T}^{*}(\bz), \cdot)\right]\right\|_{\mathcal{H}}\\
					&\le \mathbb{E}_{p(\bx)p(\by)}\Big[\left\|k(\mathcal{T}^{\theta^{*}}(\bz), \cdot)-k(\mathcal{T}^{*}(\bz), \cdot)\right\|_{\mathcal{H}}\Big]\\
					&\le \mathbb{E}_{p(\bx)p(\by)}\Big[2-2\exp\left(-\|\mathcal{T}^{\theta^{*}}(\bz)-\mathcal{T}^{*}(\bz)\|^2/b_w^2\right)\Big]^{\frac{1}{2}}\\
					&\le \frac{\sqrt{2}}{b_w}\mathbb{E}_{p(\bx)p(\by)}\Big[\|\mathcal{T}^{\theta^{*}}(\bz)-\mathcal{T}^{*}(\bz)\|\Big]\\
					&=\frac{\sqrt{2}}{b_w}\mathbb{E}_{p(\bx)p(\by)}\Big[\|\mathbf{T}^{\theta^{*}}(\bx, \by)-\mathbf{T}^{*}(\bx, \by)\|\Big].
				\end{aligned}
			\end{equation}
			Substituting \eqref{eq:3.15} and \eqref{eq:3.16} into \eqref{eq:3.12} yields
			\begin{equation*}
				\begin{aligned}
					\mathbb{E}_{p(\by)}{\operatorname{MMD}}[\mathbf{T}^{\theta}(\cdot, \by)_{\sharp}P(\bx),P(\bx\mid \by);\mathcal{F}] \le& \frac{\sqrt{2}}{b_w}\mathbb{E}_{p(\bx)p(\by)}\left[\|\mathbf{T}^{\theta^{*}}(\bx, \by)-\mathbf{T}^{*}(\bx, \by)\|\right]\\
					&+ 2\left(1+\sqrt{\ln\frac{1}{\delta}}\right)\sqrt{\frac{1}{N}}.
				\end{aligned}
			\end{equation*}
			This completes the proof.
		\end{proof}
		
		The first term in Inequality \eqref{eq:3.11} represents the approximation error of the transport map $\mathcal{T}^{\theta}$. This term becomes negligible when the parameterized function space is sufficiently rich to contain the true transport map. The second term characterizes the mean convergence rate between the approximate posterior distribution and the truth posterior distribution, which is of  $\mathcal{O}(1/\sqrt{N})$  with respect to the sample size  $N$ .
		
	}
	
	\subsection{Transport with gradient flow}
	
	In \Cref{sec:3.1}, we parameterize the transport map using a neural network and then minimize the MMD loss between the approximate coupling and the reference coupling via gradient descent. However, the choice of parameterization for the transport map can directly affect the accuracy of the approximation, and the non-convexity of the MMD as a loss function may cause the optimization to become trapped in a local minima. Inspired by stein variational gradient descent \cite{liuSteinVariationalGradient2016} and its applications in filtering \cite{pulidoSequentialMonteCarlo2019}, we consider using gradient flows to construct the transport map,
	\begin{equation}\label{eq:3.17}
		\begin{cases}
			\frac{\mathrm{d} {\bx}_{\tau}}{\mathrm{d}\tau} = \mathbf{v}_{\tau}({\bx}_{\tau}, \bar{\by}_{\tau}), \tau>0,\\
			\bar{\by}_{\tau} = \bar{\by},
		\end{cases}
	\end{equation} 
	where $\tau$ is a pseudo time and $\mathbf{v}_{\tau}$ is the drift term or velocity field that push particles toward the posterior. Arbel et al., \cite{arbelMaximumMeanDiscrepancy2019}, constructed the gradient flow of the standard MMD and studied its convergence properties. Consider a forward Euler scheme of gradient flow \eqref{eq:3.17},
	\begin{equation}\label{eq:3.18}
		\mathcal{T}_{\tau+\epsilon}({\bx}_{\tau}, \bar{\by}_{\tau}) = \left[
		\begin{matrix}
			{\bx}_{\tau} + \epsilon\mathbf{v}({\bx}_{\tau}, \bar{\by}_{\tau}) \\
			\bar{\by}_{\tau}
		\end{matrix}\right],
	\end{equation}
	where $\epsilon>0$ is the stepsize. Start from the indepedent coupling $({\bx}_{0}, \bar{\by}_{0})\sim \pi_{\bx}\otimes \pi_{\by}$, a sequence of local transformation push forward measure to the reference coupling $({\bx}_{\infty}, \bar{\by}_{\infty})\sim \pi_{\bx, \by}$. Then, the goal is to find the optimal velocity field  $\mathbf{v}_{\tau}$  that induces the steepest descent of the MMD.
	
	Suppose $\tilde{\bz},\tilde{\bz}^{\prime} \sim \pi_{\bx}\otimes \pi_{\by}$ and $\bar{\bz}, \bar{\bz}^{\prime}\sim \pi_{\bx, \by}$, the MMD between the reference coupling and the independent coupling after transformation $\mathcal{T}$ defined in \eqref{eq:3.18} can be expressed by
	\begin{equation}\label{eq:3.19}
		\operatorname{MMD}^2(\mathcal{T}) = \mathbb{E}_{\tilde{\bz}, \tilde{\bz}^{\prime}}[k(\mathcal{T}(\tilde{\bz}), \mathcal{T}(\tilde{\bz}^{\prime}))] - 2\mathbb{E}_{\tilde{\bz}, \bar{\bz}}[k(\mathcal{T}(\tilde{\bz}), \bar{\bz})] + \mathbb{E}_{\bar{\bz},\bar{\bz}^{\prime}}[k(\bar{\bz},\bar{\bz}^{\prime})].
	\end{equation}
	According to the definition of Gateaux derivative of a functional $F$,
	\begin{equation*}
		D_hF(\bx) = \lim_{\epsilon\to 0} \frac{F(\bx+\epsilon h(\bx))-F(\bx)}{\epsilon}.
	\end{equation*}
	The Gateaux derivative of MMD \eqref{eq:3.19} is 
	\begin{equation}\label{eq:3.20}
		\begin{aligned}
			D_{\mathbf{v}}\operatorname{MMD}^2(\mathcal{T}) =& -\frac{2}{b_w^2}\mathbb{E}_{\tilde{\bz}, \tilde{\bz}^{\prime}}\left[ k(\tilde{\bz}, \tilde{\bz}^{\prime})(\tilde{\bx}-\tilde{\bx}^{\prime})^{\top}(\mathbf{v}(\tilde{\bz})-\mathbf{v}(\tilde{\bz}^{\prime})) \right]\\
			&+\frac{4}{b_w^2}\mathbb{E}_{\tilde{\bz}, \bar{\bz}}\left[ k(\tilde{\bz}, \bar{\bz})(\tilde{\bx}-\bar{\bx})^{\top}\mathbf{v}(\tilde{\bz}) \right],
		\end{aligned}
	\end{equation}
	where $b_w$ is the bandwidth of kernel $k(\cdot, \cdot)$. Equation \eqref{eq:3.20} gives the the directional derivative of the MMD along the direction $\mathbf{v}$. To obtain the steepest descent direction of the MMD, we constrain each component of $\mathbf{v}(\cdot)$ to be choosen from the unit ball of a RKHS $\mathcal{F}_{\gamma}$ associated with a kernel function $k_{\gamma}(\bz_1, \bz_2)=\exp\left(-{\|\bz_1-\bz_2\|^2}/{\gamma^2}\right)$. Then, with the reproducing property of RKHS, it gives
	\begin{equation}\label{eq:3.21}
		\mathbf{v}(\bz) = \left[
		\begin{matrix}
			\langle k_{\gamma}(\bz, \cdot), \mathbf{v}_1(\cdot)\rangle_{\mathcal{F}_{\gamma}}\\
			\vdots\\
			\langle k_{\gamma}(\bz, \cdot), \mathbf{v}_n(\cdot)\rangle_{\mathcal{F}_{\gamma}}
		\end{matrix}
		\right]:=\langle k_{\gamma}(\bz, \cdot), \mathbf{v}(\cdot)\rangle_{\mathcal{F}_{\gamma}}.
	\end{equation}
	Substituting \eqref{eq:3.21} into \eqref{eq:3.20} yields
	\begin{equation}\label{eq:3.22}
		\begin{aligned}
			D_{\mathbf{v}}\operatorname{MMD}^2(\mathcal{T}) =&\left\langle -\frac{2}{b_w^2}\mathbb{E}_{\tilde{\bz}, \tilde{\bz}^{\prime}}\left[ k(\tilde{\bz}, \tilde{\bz}^{\prime})(k_{\gamma}(\tilde{\bz}, \cdot)-k_{\gamma}(\tilde{\bz}^{\prime}, \cdot))(\tilde{\bx}-\tilde{\bx}^{\prime}) \right]\right.\\
			&\left.+\frac{4}{b_w^2}\mathbb{E}_{\tilde{\bz}, \bar{\bz}}\left[ k(\tilde{\bz}, \bar{\bz})k_{\gamma}(\tilde{\bz}, \cdot)(\tilde{\bx}-\bar{\bx}) \right], \mathbf{v}(\cdot) \right\rangle_{\mathcal{F}_{\gamma}}.
		\end{aligned}
	\end{equation}
	Equation \eqref{eq:3.22} holds true for all $\mathbf{v}\in \mathcal{F}_{\gamma}$ such that $\|\mathbf{v}\|_{\mathcal{F}_{\gamma}}\le 1$. Then $D_{\mathbf{v}}\operatorname{MMD}^2 = \langle \nabla_{\bx}\operatorname{MMD}^2 , \mathbf{v} \rangle_{\mathcal{F}_{\gamma}}$ by the definition of gradient. Thus, we have
	\begin{equation}\label{eq:3.23}
		\begin{aligned}
			\nabla_{\bx}\operatorname{MMD}^2(\bz) = & -\frac{2}{b_w^2}\mathbb{E}_{\tilde{\bz}, \tilde{\bz}^{\prime}}\Big[ k(\tilde{\bz}, \tilde{\bz}^{\prime})(k_{\gamma}(\tilde{\bz}, \bz)-k_{\gamma}(\tilde{\bz}^{\prime}, \bz))(\tilde{\bx}-\tilde{\bx}^{\prime}) \Big]\\
			&+\frac{4}{b_w^2}\mathbb{E}_{\tilde{\bz}, \bar{\bz}}\Big[ k(\tilde{\bz}, \bar{\bz})k_{\gamma}(\tilde{\bz}, \bz)(\tilde{\bx}-\bar{\bx}) \Big].
		\end{aligned}
	\end{equation}
	Here, Equation \eqref{eq:3.23} represents the steepest descent direction of the MMD, which can be used to construct the local gradient flow update in \eqref{eq:3.18},
	\begin{equation}
		{\bx}_{\tau+\epsilon} = {\bx}_{\tau} - \epsilon\nabla_{\bx}\operatorname{MMD}^2(\bz_{\tau}),
	\end{equation}
	where $\bz_{\tau}=(\bx_{\tau}, \by_{\tau})$. Note that the gradient computation of the MMD \eqref{eq:3.23} involves two distinct kernel functions: the kernel  $k(\cdot, \cdot)$  originates from the MMD itself and is used to capture correlations within the dataset, while the kernel  $k_{\gamma}(\cdot, \cdot)$  arises from Equation \eqref{eq:3.21} and characterizes the correlation between the input and the data points in the dataset.
	
	Based on the setting of empirical approximation in \Cref{sec:3.1}, we can compute the expectation in \eqref{eq:3.23} with Monte-Carlo integration, 
	\begin{equation}
		\begin{aligned}
			\nabla_{\bx}\operatorname{MMD}^2(\bz) \approx & -\frac{2}{b_w^2}\frac{1}{N^2}\sum_{i=1}^N\sum_{j=1}^N\Big[ k(\tilde{\bz}^i, \tilde{\bz}^j)(k_{\gamma}(\tilde{\bz}^i, \bz)-k_{\gamma}(\tilde{\bz}^j, \bz))(\tilde{\bx}^i-\tilde{\bx}^j) \Big]\\
			&+\frac{4}{b_w^2}\frac{1}{N^2}\sum_{i=1}^N\sum_{j=1}^N\Big[ k(\tilde{\bz}^i, \bar{\bz}^j)k_{\gamma}(\tilde{\bz}^i, \bz)(\tilde{\bx}^i-\bar{\bx}^j) \Big].
		\end{aligned}
	\end{equation}
	By combining gradient flows with coupling-based transport filtering, we construct a likelihood-free and training-free filtering method. \Cref{alg:2} provides its pseudocode.
	
	\begin{algorithm}
		\renewcommand{\algorithmicrequire}{\textbf{Input:}}
		\renewcommand{\algorithmicensure}{\textbf{Output:}}
		\caption{Coupling-informed transport filter with gradient flow (TFCP-GF)}
		\begin{algorithmic}[1]
			\REQUIRE Given ensemble $\{\bx_{k-1}^{(i)}\}_{i=1}^N$ at previous time step $k-1$, current observation $\by^o_k$, discrete dynamical model $\mathbf{M}_{k}(\cdot)$, observation operator $\mathbf{H}_k(\cdot)$, model uncertainty $\pi(\boldsymbol{\eta}_{k})$;
			\FOR {$i=1,\cdots, N$} 
			\STATE $\hat{\bx}_{k}^{(i)} = \mathbf{M}_{k}(\bx_{k-1}^{(i)}) + \boldsymbol{\eta}_{k}^{(i)}$; 
			\STATE $\hat{\by}_{k}^{(i)} = \mathbf{H}_{k}(\hat{\bx}_{k}^{(i)}) + \boldsymbol{\varepsilon}_k$
			\ENDFOR 
			\STATE Randomly shuffle the forecast ensemble to generate $(\bX, \bY)$ and $(\bar{\bX}, \bY)$;
			\STATE Initialize posterior ensemble $\{{\bx}_{k}^{(i)}\}_{i=1}^N$ as $\{\hat{\bx}_{k}^{(i)}\}_{i=1}^N$
			\REPEAT
			\STATE Create update function $\mathbf{v}_{\tau}(\bx, \by) = \nabla_{\bx}\operatorname{MMD}^2(\bx, \by)$ with $\{\hat{\bx}_{k,\tau}^{(i)}, \hat{\by}_{k,\tau}^{(i)}\}_{i=1}^N$;
			\FOR{$i=1,\cdots, N$}
			\STATE $\hat{\bx}_{k,\tau+\epsilon}^{(i)} = \hat{\bx}_{k,\tau}^{(i)} - \epsilon \mathbf{v}_{\tau}(\hat{\bx}_{k,\tau}^{(i)}, \hat{\by}_{k}^{(i)})$; 
			\STATE ${\bx}_{k,\tau+\epsilon}^{(i)} = {\bx}_{k,\tau}^{(i)} - \epsilon \mathbf{v}_{\tau}({\bx}_{k,\tau}^{(i)}, \by^o_k)$; 
			\ENDFOR
			\UNTIL{Stopping criterion met}
			\ENSURE Approximated posterior ensemble $\{{\bx}_{k}^{(i)}\}_{i=1}^N$;
		\end{algorithmic}
		\label{alg:2}
	\end{algorithm}
	
	\subsection{Localization of transport maps}
	
	The treatment of high-dimensional systems remains significantly challenging in data assimilation. This is primarily due to the so-called curse of dimensionality, where the number of particles needed for accurate distribution estimation increases exponentially with the state space dimension. To mitigate these challenges and achieve robust posterior approximations with limited ensemble sizes, traditional filtering methods—such as the EnKF—incorporate localization and covariance inflation to enhance numerical stability and estimation accuracy.
	
	In this section, we introduce the idea of domain localization to the proposed coupling-informed transport filtering by imposing a local structure on the transport map $\mathcal{T}$ defined in Equation \eqref{eq:3.2}. We decompose the full high-dimensional state $\bx\in\mathbb{R}^{n}$ into $K$ low-dimensional local blocks,
	\begin{equation*}
		\bx = \{\bx^i_{loc}\}_{i=1}^K,\ \bx^i_{loc}\in\mathbb{R}^{n_i},\ \sum_{i=1}^{K}n_i=n. 
	\end{equation*}
	In the simplest configuration, each local block consists of a single state component, corresponding to $n_i=1, i=1\cdots n$. For the update of the states within each local block, we utilize information from states and observations $({\bx}_{eloc}^i, {\by}_{eloc}^i)$ in an extended local block formed by oversampling a region of radius $r_{\text{loc}}$ around the block. \Cref{fig:localization} illustrates the concept of domain localization of sparse observation in high dimensional problem, the data assimilation problem in high-dimensional state space is transformed into multiple low-dimensional data assimilation problems within extended local blocks.
	
	\begin{figure}[!htbp]
		\centering
		\subfigure{
			\begin{minipage}[t]{0.45\textwidth}
				\centering
				\includegraphics[width=.85\textwidth]{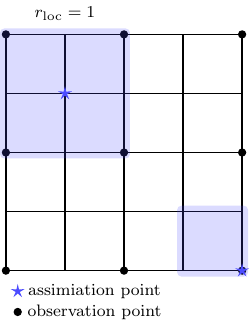}
			\end{minipage}
		}
		\subfigure{
			\begin{minipage}[t]{0.45\textwidth}
				\centering
				\includegraphics[width=.85\textwidth]{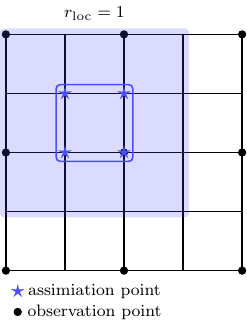}
			\end{minipage}
		}
		\caption{Domain localization of sparse observation in high dimensional problem}
		\label{fig:localization}
	\end{figure}
	
	For $i$-th extended local block, suppose $({\bX}_{eloc}^i, {\bY}_{eloc}^i)$ and $(\bar{\bX}_{eloc}^i, \bar{\bY}_{eloc}^i)$ are two independent coupled samples of $P(\bx_{eloc}^i, \by_{eloc}^i)$. We construct a local transport map $\mathcal{T}^i_{eloc}$, which pushes $P({\bx}_{eloc}^i)\otimes P({\by}_{eloc}^i)$ to $P({\bX}_{eloc}^i, {\bY}_{eloc}^i)$. Like \eqref{eq:3.2}, the local transport map is defined as
	\begin{equation}
		\mathcal{T}^i_{eloc}({\bX}^i_{eloc}, \bar{\bY}^i_{eloc}) = \left[
		\begin{matrix}
			\mathbf{T}^i_{eloc}({\bX}^i_{eloc}, \bar{\bY}^i_{eloc})\\
			\bar{\bY}^i_{eloc}
		\end{matrix}\right].
	\end{equation}
	
	As stated in \Cref{thm:3.1}, the upper component of the mapping $\mathcal{T}^i_{eloc}$ yields an approximation to the posterior distribution $P(\bx_{eloc}^i\mid \by_{eloc}^i)$. Then, the coupling $(\bX^i_{loc}, \bY^i_{loc})$ of the local block is extracted from the couple $(\bX^i_{eloc}, \bY^i_{eloc})$ within its corresponding extended local domain. Denote this extraction or mapping process as $\mathcal{G}$.
	Then the global transport map can be expressed by 
	\begin{equation*}
		\mathcal{T}({\bX}, \bar{\bY}) = \operatorname{Rearrange}\left(\left[\begin{matrix}
			\mathcal{G}\circ\mathcal{T}^1_{eloc}({\bX}^1_{eloc}, \bar{\bY}^1_{eloc}),\\ 
			\mathcal{G}\circ\mathcal{T}^2_{eloc}({\bX}^2_{eloc}, \bar{\bY}^2_{eloc}),\\
			\vdots\\
			\mathcal{G}\circ\mathcal{T}^K_{eloc}({\bX}^K_{eloc}, \bar{\bY}^K_{eloc})\\
		\end{matrix}\right]\right),
	\end{equation*}
	where operator $\operatorname{Rearrange}(\cdot)$ reconstructs the original state vector by placing the updated states of all collected local blocks back to their respective positions in the original vector. Based on the aforementioned localization technique, different local transport maps can be trained in parallel, which enhances the computational efficiency of the proposed method for high-dimensional problems.
	
	\section{Numerical Examples}\label{sec:4}
	
	In this section, we present several numerical examples to demonstrate the performance of the proposed coupling-informed transport filter (TFCP) and the coupling-informed transport filter with gradient flow (TFCP-GF). In \Cref{sec:4.1} and \Cref{sec:4.2}, we evaluate the performance of TFCP and TFCP-GF in approximating the posterior distribution for static problems and dynamical systems, respectively. \Cref{sec:4.3} and \Cref{sec:4.4} focus on the accuracy of state estimation for stochastic dynamical systems and the effectiveness of domain localization in high-dimensional problems. The numerical examples are primarily benchmarked against traditional filtering methods, including SIRPF, EnKF, and LETKF. 
	
	To evaluate the accurancy of the inversion to the true state, the Average Root Mean Square Error (RMSE) is used,
	\begin{equation*}
		\text{RMSE} = \mathbb{E}\left[\frac{1}{T}\sum\limits_{k=1}^{T}\|\bar{\bx}_k - \bx_k^*\|_2 / \sqrt{n}\right],
	\end{equation*}
	where $\bar{\bx}_k \in \mathbb{R}^n$ is the ensemble mean and $\bx_k^*\in \mathbb{R}^n$ is the true state. To assess the quality of uncertainty estimation, we adopt the Coverage Probability (CP) as a metric,
	\begin{equation*}
		\text{CP} = \mathbb{E}\left[\frac{1}{n}\sum_{i=1}^n\left(\frac{1}{T}\sum_{k=1}^T \mathbb{I}_{s_i\le t_{\alpha}}\right)\right],\quad s_i = \left|\bar{\bx}_{k, i}-\bx^*_{k,i}\right|/\sqrt{\mathbf{C}_{k, (i,i)}^a},
	\end{equation*}
	where $\mathbf{C}_{k}$ is the ensemble covariance, $\mathbb{I}$ is indicator function and $t_{\alpha}$ is $\alpha$ quantiles of standard normal distribution. Ideally, when a filtering method provides well-calibrated uncertainty quantification, the coverage probability should be close to the nominal value $\alpha$. 
	
	\subsection{Static inverse problem}\label{sec:4.1}
	In the first numerical example, we implement two simple static inverse problems to present the ablility of coupling-informed transport filter in the approximation of the posterior. 
	
	Suppose $X\in \mathbb{R}$ is a random variable with Gaussian prior $\mathcal{N}(\mu_0, \sigma_0^2)$, and $H(\cdot)$ is a nonlinear observation operator defined by
	\begin{equation*}
		H(X) = X(X-1). 
	\end{equation*}
	Given an observation $Y^o$ corrupted by Gaussian noise $\varepsilon\sim \mathcal{N}(0, \sigma_1^2)$ , the goal of the inverse problem is to estimate the Bayesian posterior. Due to the quadratic nature of the observation operator, the Bayesian posterior is non-Gaussian, and in the case of certain observations, the posterior distribution becomes bimodal. For reference, the exact posterior can be calculated using the numerical integral method based on Bayes' rule,
	\begin{equation*}
		P(X\mid Y) \propto P(X)P(Y\mid X).
	\end{equation*}
	For simulation, we set parameters $\mu_0=0.5, \sigma_0=1.0, \sigma_1=0.5$ and the observation $Y^o=1.2$.  \Cref{fig:1d-static} illustrates the performance of TFCP in estimating the posterior distribution, showing a scatter plot of samples from the estimated joint distribution via the transport map, as well as the trajectories of particles transported from the prior to the approximate posterior.
	
	\begin{figure}[!htbp]
		\centering
		\subfigure{
			\begin{minipage}[t]{0.3\textwidth}
				\centering
				\includegraphics[width=\textwidth]{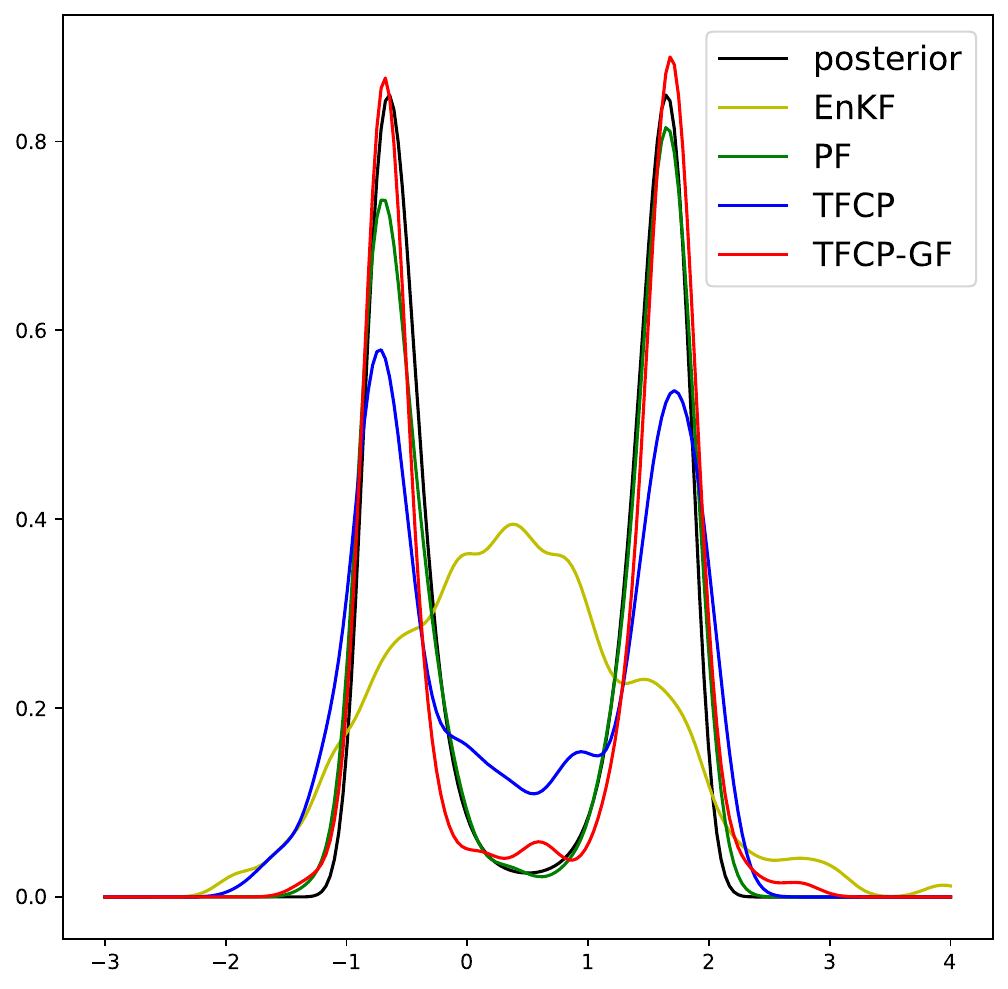}
			\end{minipage}
		}
		\subfigure{
			\begin{minipage}[t]{0.3\textwidth}
				\centering
				\includegraphics[width=\textwidth]{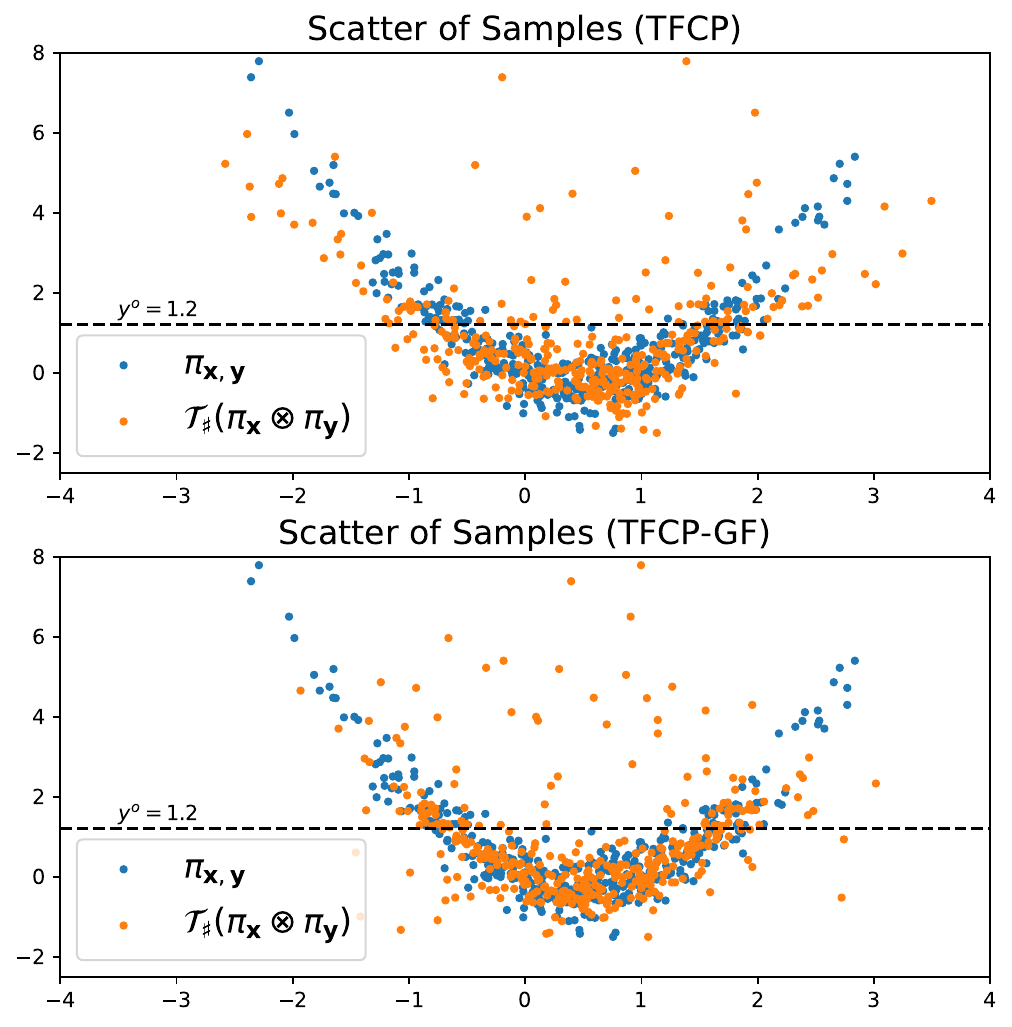}
			\end{minipage}
		}
		\subfigure{
			\begin{minipage}[t]{0.3\textwidth}
				\centering
				\includegraphics[width=\textwidth]{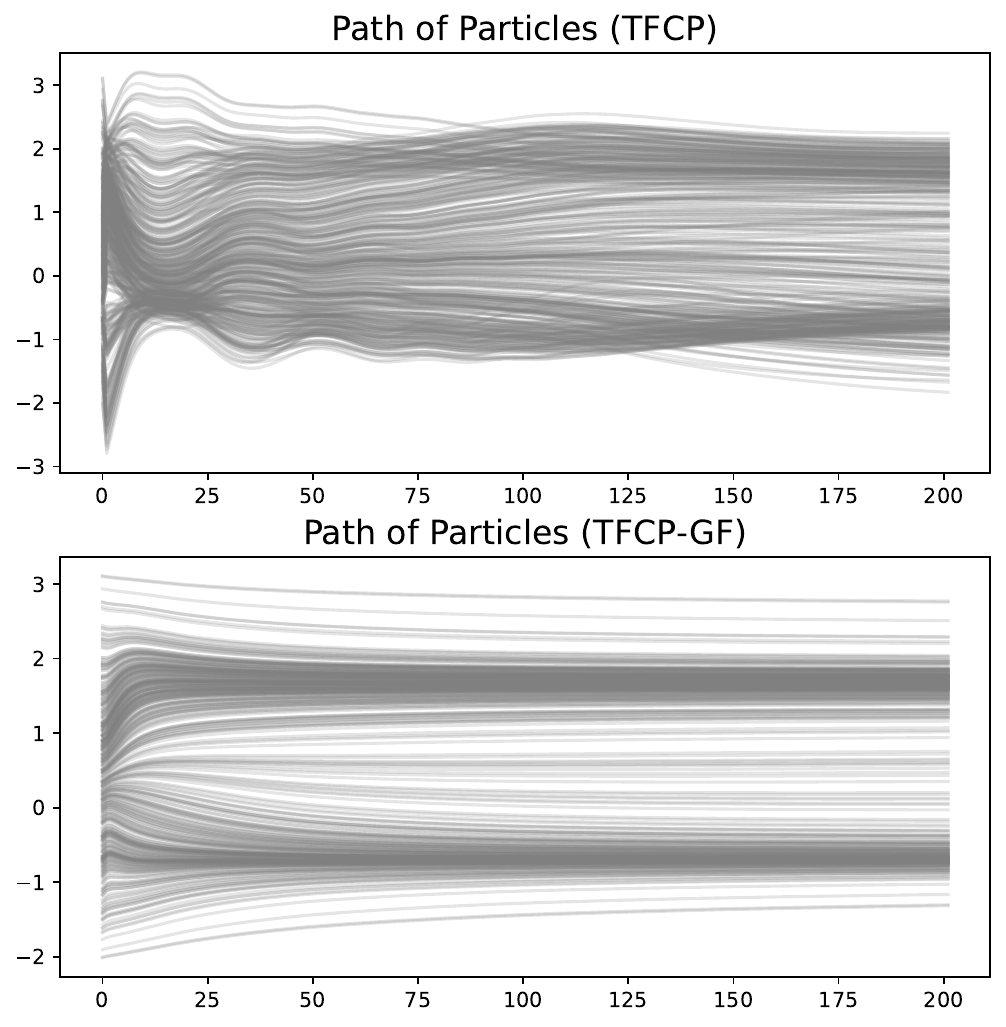}
			\end{minipage}
		}
		\caption{Left panel: Approximated posterior $P(X\mid Y^o)$ given by TFCP, TFCP-GF, EnKF and SIRPF. Middle panel: Scatter plot of joint measure $\pi_{\bx, \by}$ and $\mathcal{T}_{\sharp}(\pi_{\bx}\otimes \pi_{\by})$. Right Panel: Trajectories of particles transported from the prior to the approximate posterior.}
		\label{fig:1d-static}
	\end{figure}
	
	The left panel of \Cref{fig:1d-static} shows the approximated posterior $P(X\mid Y^o)$ generated from TFCP, TFCP-GF, EnKF and PF. All methods use a sample size of $N = 400$. In TFCP, we parameterize the transport map using three simple shallow fully connected neural network in \Cref{fig:transport-structure}. Compared with the reference posterior, both TFCP and TFCP-GF accurately estimated the bimodal distribution, whereas EnKF fails in this case. TFCP-GF even exhibits the same accuracy as PF. 
	Since TFCP and TFCP-GF, like the particle filter (PF), directly approximate the posterior distribution, they yield accurate posterior estimates when the objective function is sufficiently optimized. 
	The middle panel of \Cref{fig:1d-static} presents the scatter plot of the joint distribution $\pi_{\bx, \by}$ and $\mathcal{T}_{\sharp}(\pi_{\bx}\otimes \pi_{\by})$. Both TFCP and TFCP-GF accurately capture the dominant part of the joint distribution, although a small number of sample points are not effectively transported. However, since our approach is primarily based on matching expectations, this does not compromise the accuracy of the posterior estimation provided by the transport map.
	As in \Cref{alg:1} and \Cref{alg:2}, the proposed approachs involves first estimating the joint distribution, and subsequently extracting the conditional distribution corresponding to a specific value of $Y^o = 1.2$. 
	The right panel of \Cref{fig:1d-static} shows the particle trajectories for TFCP and TFCP-GF. Both methods gradually transport particles from the Gaussian prior toward the bimodal posterior. However, the trajectories of TFCP appear less structured and exhibit slower convergence as the number of iterations increases, whereas TFCP-GF produces smoother particle paths and achieves convergence more rapidly. Consistent with this observation, the left panel confirms that TFCP-GF yields a more accurate approximation of the posterior distribution.
	
	To better illustrate the differences between TFCP and PF, we present an example in a two-dimensional state space. For slightly higher dimensions, the PF may exhibit particle collapse. Suppose the prior of $\bX\in\mathbb{R}^2$ is a Gaussian distribution $\mathbf{N}(\boldsymbol{\mu}_0, \boldsymbol{\Sigma}_0)$, and the observation function is defined as 
	\begin{equation*}
		H(\bX) = \bX\odot \bX = X_1^2 + X_2^2,
	\end{equation*}
	where $\bX=[X_1, X_2]^{\top}$. Given $\boldsymbol{\mu}_0 = [0.5, 0.5]^{\top}$ and $\boldsymbol{\Sigma}_0=\mathbf{I}_{2\times 2}$, the posterior approximated by TFCP, EnKF and PF are shown in \Cref{fig:2d-static} when observation $Y^o = 1.5$ and observation noise is Gaussian $N(0, 0.5^2)$. To clearly demonstrate the approximation effects of different methods on the posterior distribution, we use 400 sampling particles and plotted a two-dimensional histogram of the approximate posterior. In this case, TFCP is still parameterized using a simple fully connected neural network.
	
	\begin{figure}[!htbp]
		\centering
		\subfigure{
			\begin{minipage}[t]{0.22\textwidth}
				\centering
				\includegraphics[width=\textwidth]{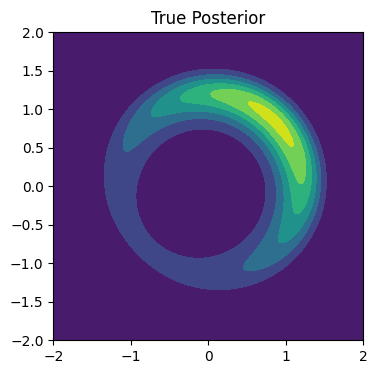}
			\end{minipage}
		}
		\subfigure{
			\begin{minipage}[t]{0.22\textwidth}
				\centering
				\includegraphics[width=\textwidth]{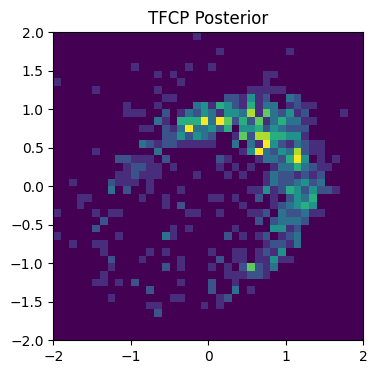}
			\end{minipage}
		}
		\subfigure{
			\begin{minipage}[t]{0.22\textwidth}
				\centering
				\includegraphics[width=\textwidth]{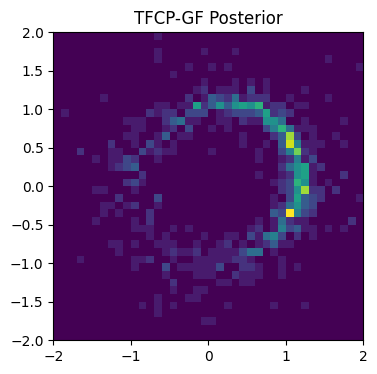}
			\end{minipage}
		}
		\subfigure{
			\begin{minipage}[t]{0.22\textwidth}
				\centering
				\includegraphics[width=\textwidth]{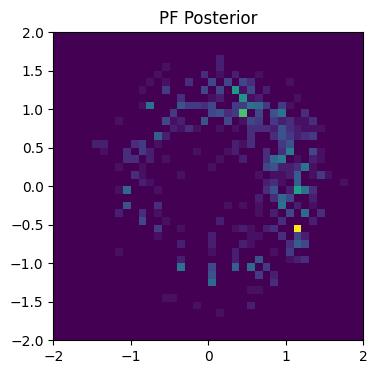}
			\end{minipage}
		}
		\caption{Approximated posterior $P(X\mid Y^o=1.5)$ given by TFCP, TFCP-GF and PF.}
		\label{fig:2d-static}
	\end{figure}
	
	As shown in \Cref{fig:2d-static},  TFCP, TFCP-GF and PF successfully capture the form of the non-Gaussian posterior. Among the three methods, TFCP-GF achieves the best approximation of the posterior distribution. TFCP may suffer from insufficient optimization, leading to poor estimation in low-probability regions, while PF collapses in a subset of sample points, exhibiting mild particle collapse rather than adequately covering the high-probability regions. 
	
	\subsection{Bene\v{s} Filter}\label{sec:4.2}
	
	In this subsection, we evaluate the effectiveness of TFCP in tracking the evolution of the posterior distribution over time in a dynamic system. Consider the following continuous time filter problem, 
	\begin{equation*}
		\begin{aligned}
			&\mathrm{d}X_t = \mu\sigma_B\tanh(\frac{\mu}{\sigma_B}X_t)\mathrm{d}t + \sigma_B \mathrm{d}B_t, X_0 = x_0, \\
			&\mathrm{d}Z_t = (h_1X_t + h_1h_2)\mathrm{d}t + \mathrm{d}W_t,
		\end{aligned}
	\end{equation*}
	where $\{B_t\}$ and $\{W_t\}$ are one-dimensional independent Brownian motion, $x_0$ is the initial state and the parameters $\mu, \sigma_B, h_1, h_2\in \mathbb{R}$. This problem, known as the Bene\v{s} filter, is one of the rare cases in continuous-time nonlinear filtering for which an exact closed-form solution can be derived. The analytical posterior to the filtering problem can be expressed by a mixture of two Gaussian distributions \cite{taghvaeiDiffusionMapbasedAlgorithm2020},
	\begin{equation*}
		\omega_tN(a_t-b_t, \sigma_t^2) + (1-\omega_t)N(a_t+b_t, \sigma^2_t),
	\end{equation*}
	where
	\begin{equation*}
		\begin{aligned}
			&a_t = \sigma_B\Psi_t\tanh(h_1\sigma_Bt) + \frac{h_2+x_0}{\cosh(h_1\sigma_Bt)}-h_2,\quad b_t = \frac{\mu}{h_1}\tanh(h_1\sigma_Bt),\\
			&\sigma_t^2 = \frac{\sigma_B}{h_1}\tanh(h_1\sigma_Bt),\quad
			\Psi_t = \int_0^t \frac{\sinh(h_1\sigma_Bs)}{\sinh(h_1\sigma_Bt)}\mathrm{d}Z_t,\quad
			\omega_t = \frac{1}{1+e^{\frac{2a_tb_t}{\sigma_B}\coth(h_1\sigma_Bt)}}.
		\end{aligned}
	\end{equation*}
	The filtering problem is parameterized by $\mu=0.5, \sigma_B=0.8, h_1=0.4, h_2=0$, and numerical examples are conducted over the time horizon $[0,3]$ with a temporal discretization step of $\Delta t= 0.1$. The stochastic ordinary differential equations are integrated numerically using the fourth-order Runge–Kutta scheme. Here, the TFCP is parametrized as the nudging-like form in \eqref{eq:3.7}. We define the mapping $\mathbf{T}^{\theta}$ using two fully connected layers augmented with two residual connection blocks, each layer containing 20 neurons. In each assimilation window, we train for 400 optimization steps.
	
	\Cref{fig:rmse-benes} presents a comparison of SIRPF, TFCP, and TFCP-GF in terms of estimating the true posterior mean, with all methods utilizing $400$ particles. Among the three methods, TFCP and TFCP-GF yield more accurate approximations of the posterior mean compared to SIRPF. 
	While all three methods aim to approximate the posterior distribution, the SIRPF tends to accumulate additional stochastic error introduced by the resampling step, which propagates and amplifies over time. In contrast, the proposed TFCP and TFCP-GF, which are based on deterministic transport maps, exhibit greater robustness.
	In this experiment, the performance of TFCP and TFCP-GF is comparable, as the true posterior distribution is nearly unimodal, allowing TFCP to sufficiently optimize the target loss function.
	
	\begin{figure}[!htbp]
		\centering
		\includegraphics[width=.45\textwidth]{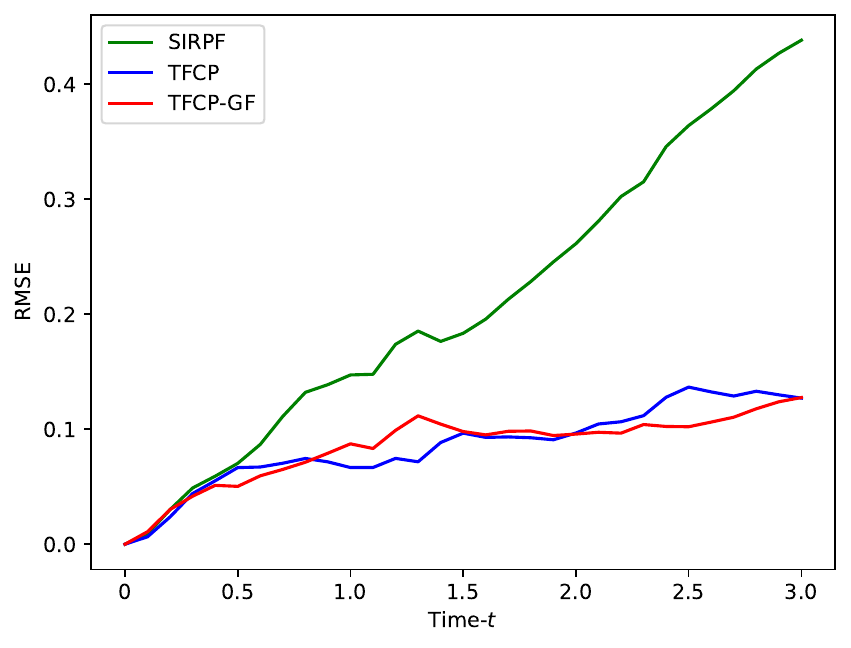}
		\caption{Plot of RMSE in estimating the posterior mean.}
		\label{fig:rmse-benes}
	\end{figure}
	
	To illustrate the accuracy of TFCP in estimating the posterior distribution, we select three time points: $t=1.0$, $t=2.0$, and $t=3.0$. \Cref{fig:dist-benes} presents histograms of the particle distributions obtained from SIRPF, TFCP and TFCP-GF filtering methods at these time points, highlighting their performances in posterior distribution estimation. At the time $t=1.0$, the posterior estimates from three methods are relatively close to the true posterior, with TFCP-GF's histogram being the closest to the true posterior. However, as time evolves, the posterior obtained by SIRPF becomes far away from the true posterior distribution, exhibiting substantial bias. Although TFCP and TFCP-GF also show some degradation in performance, they continue to provide overall accurate approximations of the posterior distribution.
	\begin{figure}[!h]
		\centering
		\subfigure{
			\begin{minipage}[t]{0.7\textwidth}
				\centering
				\includegraphics[width=\textwidth]{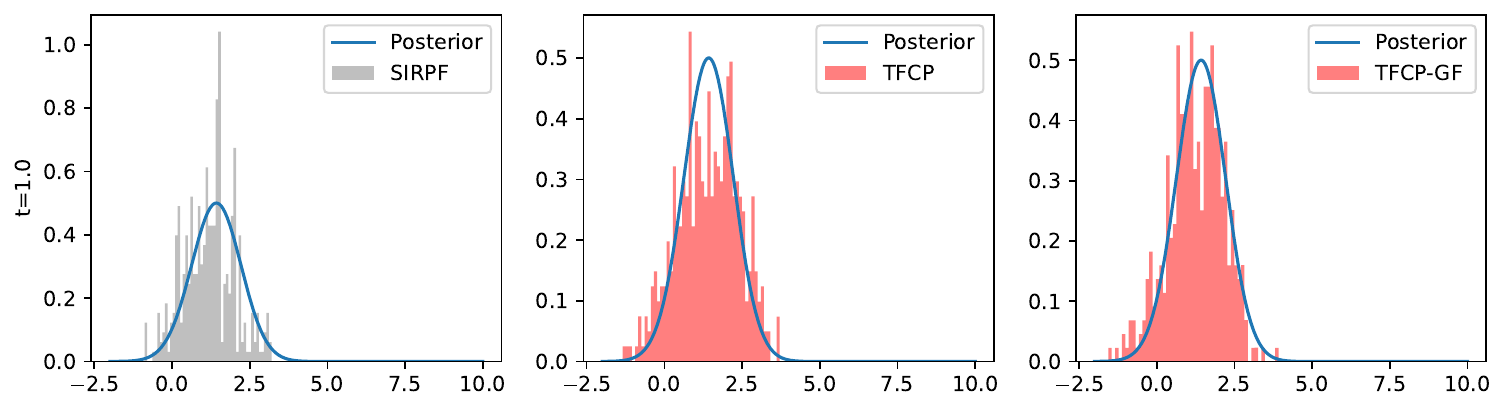}
			\end{minipage}
		}\\
		\subfigure{
			\begin{minipage}[t]{0.7\textwidth}
				\centering
				\includegraphics[width=\textwidth]{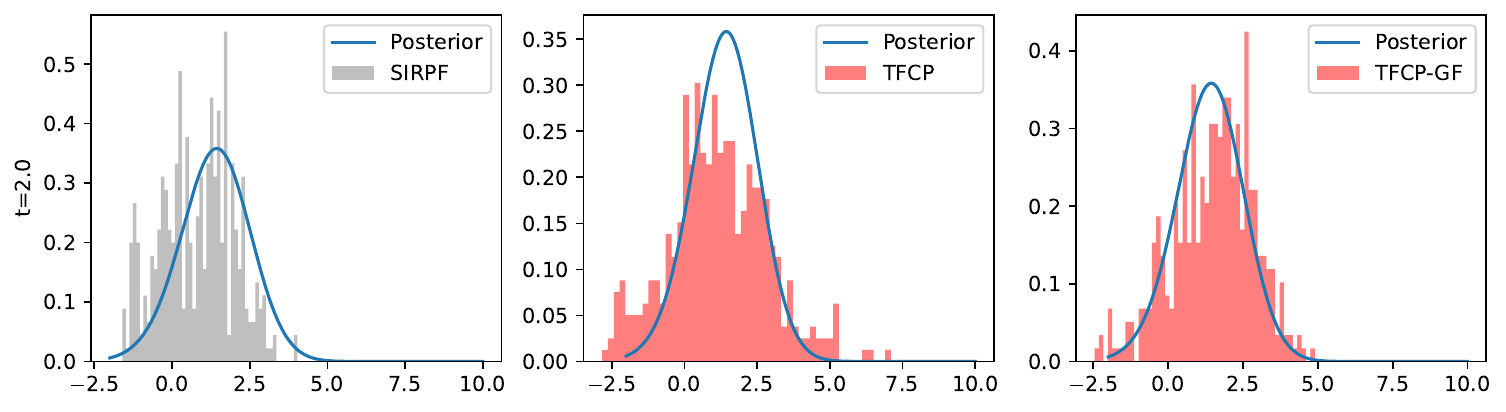}
			\end{minipage}
		}\\
		\subfigure{
			\begin{minipage}[t]{0.7\textwidth}
				\centering
				\includegraphics[width=\textwidth]{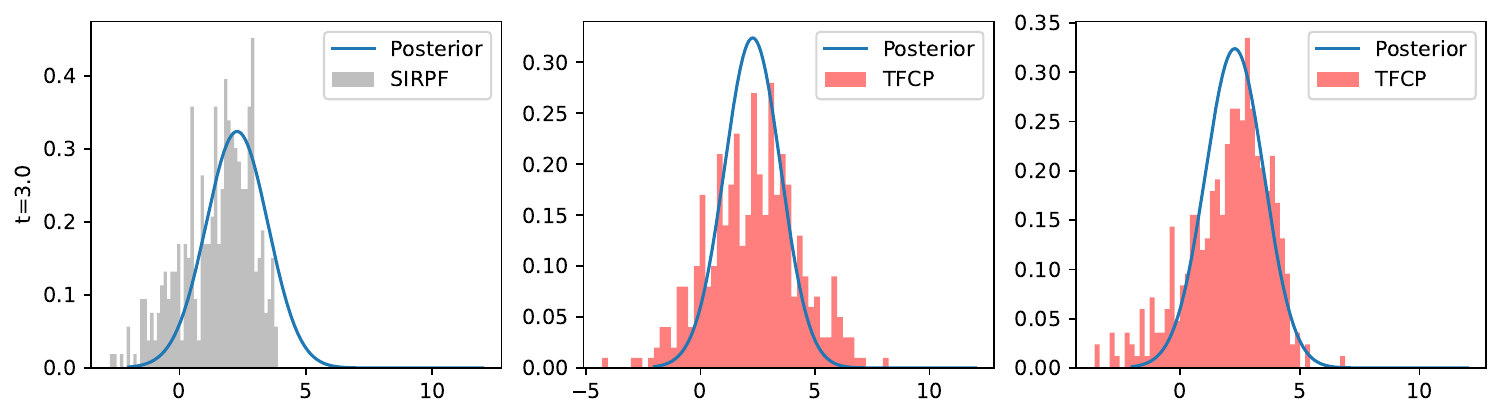}
			\end{minipage}
		}
		\caption{The histograms of particles obtained using PF, TFCP and TFCP-GF as compared with the exact distribution at time $t=1.0$, $t=2.0$, and $t=3.0$.}
		\label{fig:dist-benes}
	\end{figure}
	
	\subsection{Lorenz'63 System}\label{sec:4.3}
	
	In this subsection, we assess the performance of TFCP through its application to state estimation in a moderately complex Lorenz'63 system. It describes the nonlinear dynamical behavior in atmospheric circulation. The Lorenz'63 model is defined by the following three coupled nonlinear ordinary differential equations:
	\begin{equation*}
		\begin{aligned}
			{\mathrm{d}X_1} &= -\sigma (X_1 + X_2){\mathrm{d}t} + \gamma_1 {\mathrm{d} B_1},\\
			{\mathrm{d}X_2} &= (-X_1X_2+\rho X_1 - X_2){\mathrm{d}t} + \gamma_2 {\mathrm{d} B_2},\\
			{\mathrm{d}X_3} &= (X_1X_2-\beta X_3){\mathrm{d}t} + \gamma_3 {\mathrm{d} B_3}. 
		\end{aligned}
	\end{equation*}
	Here $\{X_1\}$, $\{X_2\}$ and $\{X_3\}$ are one dimensional stochastical process, $\{B_1\}$, $\{B_2\}$ and $\{B_3\}$ are brownian motions assumed to be independent. The parameters $\sigma=10$, $\beta=8/3$ and $\rho=28$ lead to the well-known Lorenz attractor. The noise level of different state variables are set to be $\gamma_1=\gamma_2=\gamma_3=4.0\times 10^{-4}$, and then we simulated the continuous time dynamical system using fourth order Runge-Kutta method with a time stepsize $\Delta t=0.01$. The initial condition of the dynamical system is set by a Gaussian distribution $\mathcal{N}(\mathbf{m}_0, \mathbf{C}_0)$, where $m_0$ is zero vector and $\mathbf{C}_0=\mathbf{I}_{3\times 3}$. Partial observation operator is considered, 
	\begin{equation*}
		H(\bX) = X_1 + \varepsilon,
	\end{equation*}
	where $\bX = [X_1, X_2, X_2]^{\top}$ and $\varepsilon\sim\mathcal{N}(0, \eta^2)$ is the additive Gaussian noise, we take $\eta=1.0$. 
	Partial observation introduces significant complexity and severe nonlinearities, posing substantial challenges for accurate state estimation and posterior approximation. Therefore, we compare our method with the widely used method in nonlinear filtering, the EnKF.
	
	As in the previous subsection, the transport map $\mathbf{T}^{\theta}$ defined in \eqref{eq:3.7} is parameterized by two fully connected layers augmented with two residual connection blocks, each layer containing 20 neurons. The loss function \eqref{eq:3.9} is optimized using the AdamW algorithm with a learning rate of $10^{-2}$ and a fixed number of $200$ training iterations. \Cref{fig:rmse-Lorenz63} presents the average RMSE performance of both TFCP, TFCP-GF and EnKF across various observation intervals and ensemble sizes.
	\begin{figure}[!h]
		\centering
		\subfigure{
			\begin{minipage}[t]{0.43\textwidth}
				\centering
				\includegraphics[width=\textwidth]{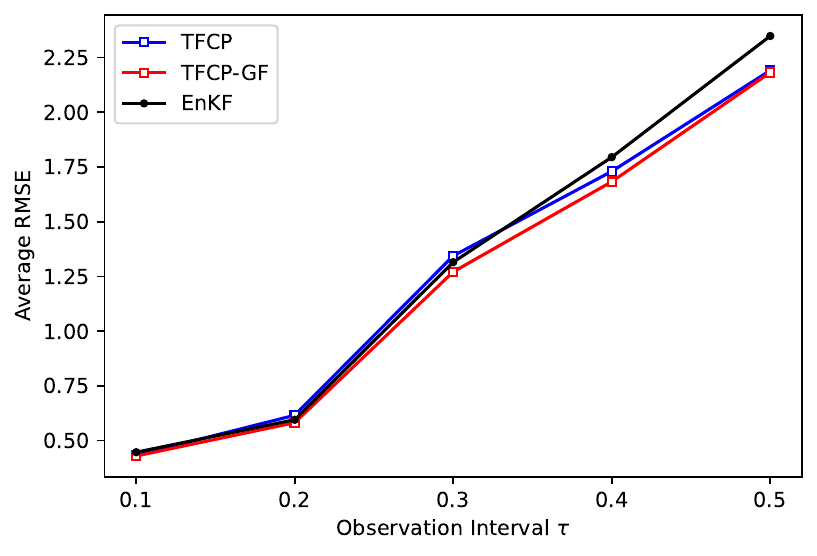}
			\end{minipage}
		}
		\subfigure{
			\begin{minipage}[t]{0.43\textwidth}
				\centering
				\includegraphics[width=\textwidth]{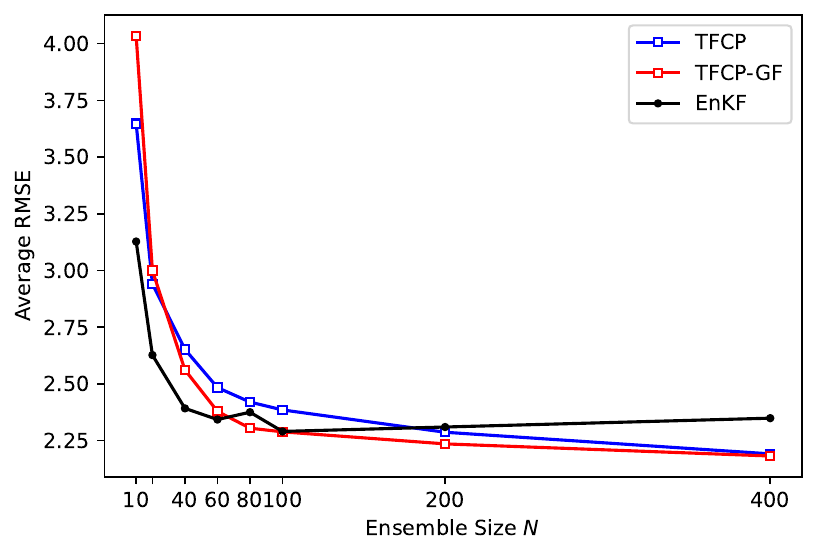}
			\end{minipage}
		}
		\caption{Average RMSE of Lorenz'63 system for ensemble size $N=400$ as a function of observation interval (Left), and for observation interval $\Delta t_{ob}=0.5$ as a function of ensemble size (Right).}
		\label{fig:rmse-Lorenz63}
	\end{figure}
	
	Left panel of the \Cref{fig:rmse-Lorenz63} presents the average RMSE of Lorenz'63 system for ensemble size $N=400$ for different observation intervals. In scenarios with small observation intervals $\tau \in \{0.1, 0.2, 0.3\}$, the state estimation performance of TFCP and TFCP-GF is close to EnKF. As the observation interval increases $\tau \in \{0.4, 0.5\}$, TFCP and TFCP-GF achieve higher estimation accuracy. With increasing observation intervals, the filtering problem exhibits stronger nonlinear and non-Gaussian characteristics. This leads to a degradation in the performance of EnKF, which relies on Gaussian approximations. Right panel of \Cref{fig:rmse-Lorenz63} shows the average RMSE under a fixed observation interval $\tau=0.5$ for varying ensemble sizes. The accuancy of TFCP-GF improves with larger ensemble sizes $N\ge 60$ compare to EnKF, while TFCP requires ensemble size large than $N=200$. TFCP-GF demonstrates superior performance and stability compared to TFCP.
	
	\Cref{fig:cp-lorenz63} presents a comparison of TFCP, TFCP-GF and EnKF in terms of uncertainty quantification for the filtering problem, evaluated using the Coverage Probability. For the $95\%$ confidence interval, TFCP and TFCP-GF achieve a Coverage Probability value that is closer to the nominal $95\%$ coverage, indicating more accurate and reliable uncertainty estimation compared to EnKF.
	
	\begin{figure}[!htbp]
		\centering
		\includegraphics[width=.45\textwidth]{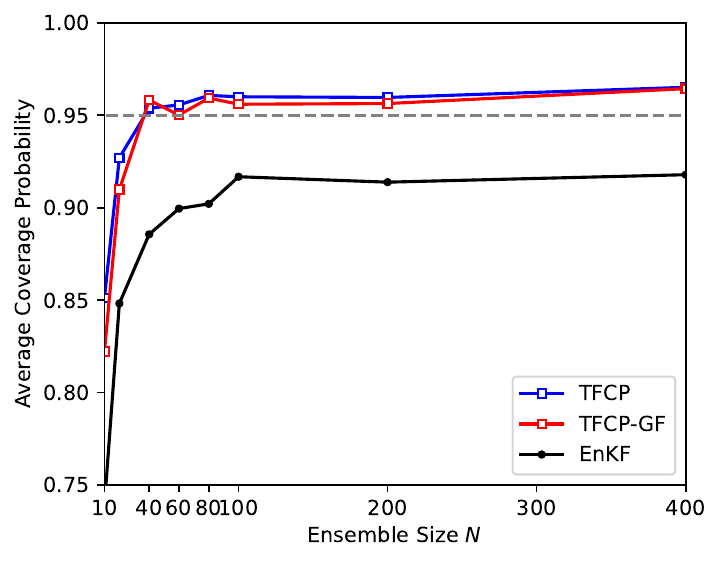}
		\caption{Plot of Coverage Probability of Lorenz'63 system for observation interval $\Delta t_{ob}=0.5$ as a function of ensemble size.}
		\label{fig:cp-lorenz63}
	\end{figure}
	
	When only the third component of the Lorenz ’63 system is observed, the marginal posterior distributions of the first two components are bimodal. Here, we present the distribution estimation results of the Lorenz ’63 system obtained by TFCP.
	
	\begin{figure}[!htbp]
		\centering
		\includegraphics[width=.65\textwidth]{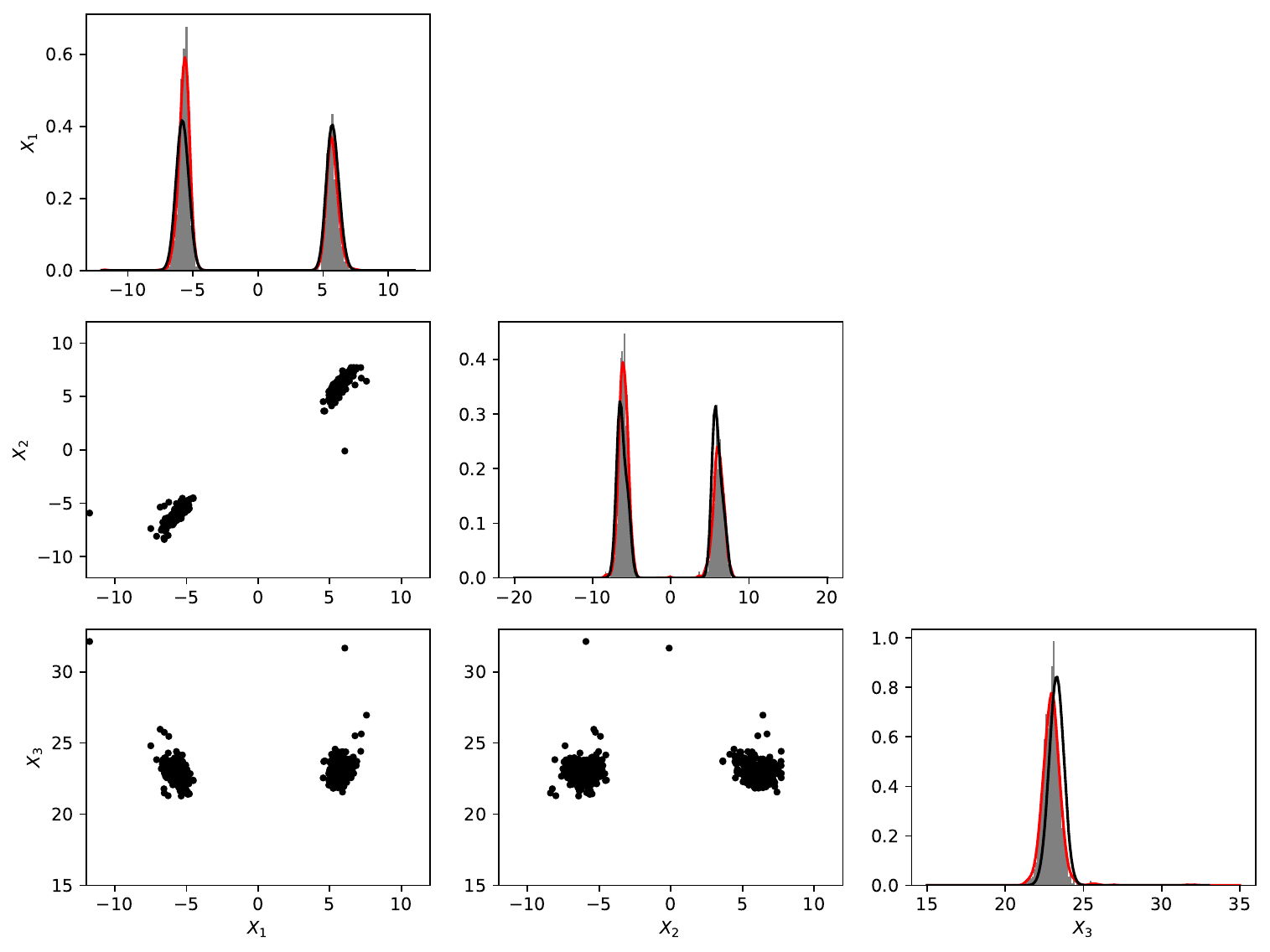}
		\caption{Scatter and plot of the marginal posterior distribution approximated by TFCP.}
		\label{fig:cp-lorenz63_X3}
	\end{figure}
	
	In Figure 4.10, the black curves represent the posterior distribution approximated by 10,000 particles using particle filtering. It can be seen that when only the third dimension is observed, the posterior distributions of the first two components are bimodal, while the marginal posteriors for all three dimensions appear approximately Gaussian. TFCP accurately captures the multimodal structure of the posterior distribution. Moreover, compared to standard particle filtering, TFCP prevents particle collapse by preserving particle diversity through the transport map, enabling a more accurate posterior approximation even with a significantly smaller number of particles.
	
	\subsection{Lorenz'96 System}\label{sec:4.4}
	
	In this subsection, we consider a high-dimensional dynamical system. In this example, we mainly evaluate the performance of TFCP in state estimation for high-dimensional problems, as well as the effectiveness of the localization. The Lorenz'96 system is a simplified mathematical model designed to capture the nonlinear dynamical behavior of atmospheric circulation systems. The state at time $t$ is a $J$-dimensional vector $\mathbf{X}(t)=\left[X_1(t),X_2(t), \cdots X_J(t)\right]^{\top}$, which is governed by the ODE system
	\begin{equation*}
		\begin{aligned}
			\frac{\mathrm{d}X_i}{\mathrm{d}t} &= (X_{i+1}-X_{i-2})X_{i-1} - X_i + F + \gamma\frac{\mathrm{d} B_i}{\mathrm{d}t}, i\in\{1,\cdots, J\},\\
			X_0&=X_J, X_{J+1}=X_1, X_{-1} = X_{J-1},
		\end{aligned}
	\end{equation*}
	with a constant forcing term $F$ and independent Brownian motions $\{B_i\}$. In this example, we take $F=8$, which leads to a fully chaotic dynamic. The noise level is set to be $\gamma=4\times 10^{-4}$, and a fourth-order Runge-Kutta method is used to integrate the ODEs with a time stepsize $\Delta t= 0.01$. The hard case setup for Lorenz'96 system, with sparse observation in both space and time, is considered. The dimension of state and observation are $J=40$ and $d=20$ (observing every other component of the state), repectively. Each observation is corrupted by an independent additive Gaussian noise with variance $\eta^2 = 1.0$. 
	
	The number of particles grows exponentially with respect to the dimension of state space to  accurately approximate the posterior distribution, which is computationally prohibitive.
	To address this challenge, we adopt the TFCP with localiaztion. And the Local Ensemble Transform Kalman Filter (LETKF) is selected as the baseline method for performance comparison. The use of localization enables parallel updating of different local blocks. The transport map for each local block is constructed by following the same configuration as in the Lorenz'63 example. \Cref{fig:rmse-Lorenz96} shows the average RMSE and average coverage probability of TFCP and LETKF under an observation interval of $\Delta t_{ob} = 0.5$ and a localization radius of $r_{loc}=1$. With a localization radius of $1$, each local block comprises three state components and is associated with one to two observations.
	
	\begin{figure}[!h]
		\centering
		\subfigure{
			\begin{minipage}[t]{0.43\textwidth}
				\centering
				\includegraphics[width=\textwidth]{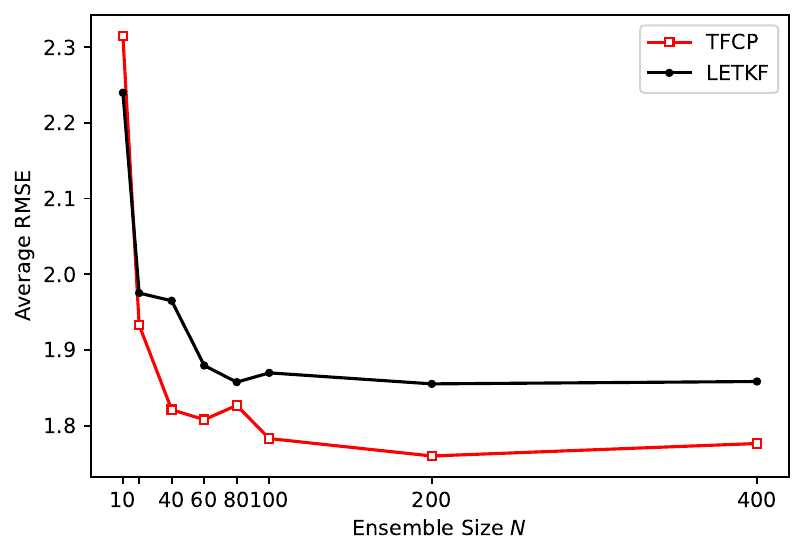}
			\end{minipage}
		}
		\subfigure{
			\begin{minipage}[t]{0.44\textwidth}
				\centering
				\includegraphics[width=\textwidth]{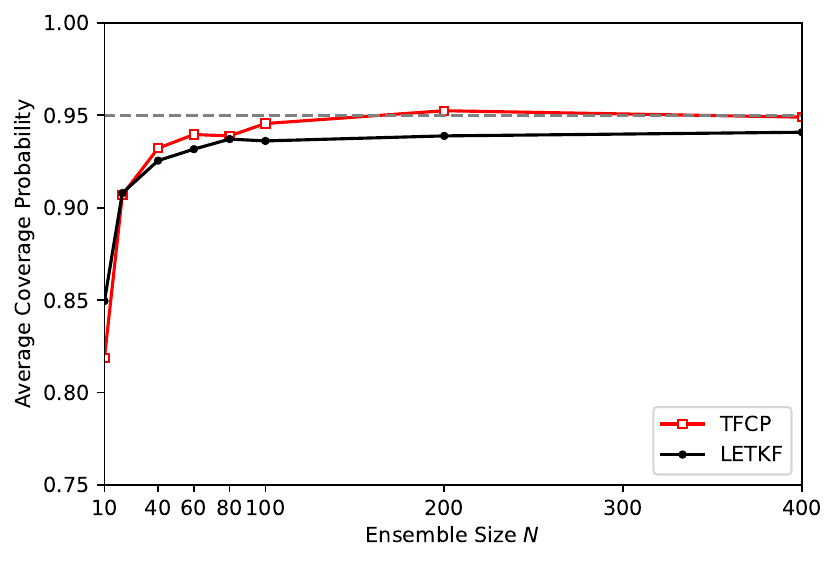}
			\end{minipage}
		}
		\caption{Average RMSE (Left) and Average Coverage Probability (Right) of Lorenz'96 system for observation interval $\Delta t_{ob}=0.5$ as a function of ensemble size.}
		\label{fig:rmse-Lorenz96}
	\end{figure}
	
	Left panel of \Cref{fig:rmse-Lorenz96} presents the RMSE under different ensemble sizes. Compare with LETKF, TFCP demonstrates better accuracy in estimating the true state across various ensemble sizes. Both TFCP and LETKF reach their optimal accuracy when the ensemble size reaches $100$. However, TFCP achieves higher estimation accuracy than LETKF while requiring a much smaller number of particles. TFCP achieves accuracy beyond the optimal level of LETKF with only $40$ particles. TFCP guides the ensemble toward a more accurate approximation of the posterior distribution, enabling it to retain more information of history in prior over time evolution.
	Right panel of \Cref{fig:rmse-Lorenz96} shows the average coverage probability of $95\%$ confidence interval under different ensemble sizes. By the numerical results, TFCP demonstrates better agreement with the theoretical $95\%$ coverage level than LETKF. This shows that TFCP produces posterior estimates with better uncertainty quantification.
	
	\begin{figure}[!htbp]
		\centering
		\includegraphics[width=.45\textwidth]{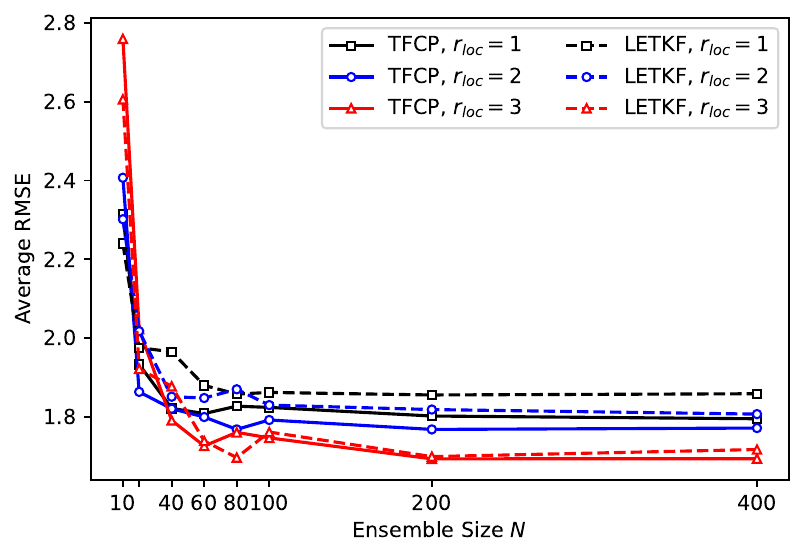}
		\caption{Plot of Average RMSE of Lorenz'96 system for observation interval $\Delta t_{ob}=0.5$ and different domain localization radius $r_{loc}$ as a function of ensemble size.}
		\label{fig:rmse-l96-loc}
	\end{figure}
	
	\Cref{fig:rmse-l96-loc} shows the comparison of average RMSE between TFCP and LETKF under different localization radius. Both TFCP and LETKF exhibit improvements in average RMSE when using larger localization radius. This can be attributed to two factors: (1) a larger localization radius incorporates more observational data into each local update; (2) it mitigates the errors introduced by truncating distant correlations, leading to more accurate state estimates. It is observed that the advantage of TFCP over LETKF become small with increasing localization radius. This can be attributed to the fact that, under a fixed ensemble size, approximating the posterior distribution of a high dimensional state space becomes significantly challenging. 
	
	\subsection{Kolmogorov flow}\label{sec:4.5}
	
	In this section, we consider the assimilation of a Kolmogorov flow, which is a viscous and incompressible fluid flow defined on the two-dimensional periodic domain $[0, 2\pi]^2$ with a single-mode sinusoidal external forcing,
	\begin{equation}
		\begin{cases}
			\frac{\partial \mathbf{u}}{\partial t} = -(\mathbf{u}\cdot \nabla)\mathbf{u}+\nu\nabla^2\mathbf{u}-\frac{1}{\rho}\nabla p+\mathbf{F},\\
			\nabla\cdot \mathbf{u}=0,
		\end{cases}
	\end{equation}
	where $\mathbf{u}$ is the velocity field, $\nu$ is the kinematic viscosity, $p$ is the pressure and $\rho$ is the fluid density. In this example, we take $\nu=10^{-3}$ and $\rho=1$.The {\it jax-cfd} package \cite{kochkovMachineLearningAccelerated2021} is used to simulate the model with the time discretization step $\Delta t=0.001$. 
	In contrast to methods such as the EnKF, the proposed approach, based on transport maps, does not require vectorizing the two-dimensional field and instead directly constructs a field-to-field transformation. We employ a CNN to implement TFCP, thereby reducing the number of parameters in the transport map. In this setting, the ensemble size is $N=400$. 
	Here, we consider the data assimilation problem under two scenarios: sparse observations and partial observations, respectively. Figures \ref{fig:Kol_flow_saprse} and \Cref{fig:Kol_flow_partial} show the filtered posterior state estimates and the corresponding absolute error at five distinct time instants obtained by TFCP.
	In both cases, TFCP effectively integrates model dynamics and observational information to yield accurate state estimates. The TFCP is capable of reconstructing the unobserved components of the velocity field. As time evolves and more observations become available, the prior field, incorporating historical information, enables increasingly accurate estimation of the velocity field.
	
	\begin{figure}[!ht]
		\centering
		\includegraphics[width=.85\textwidth]{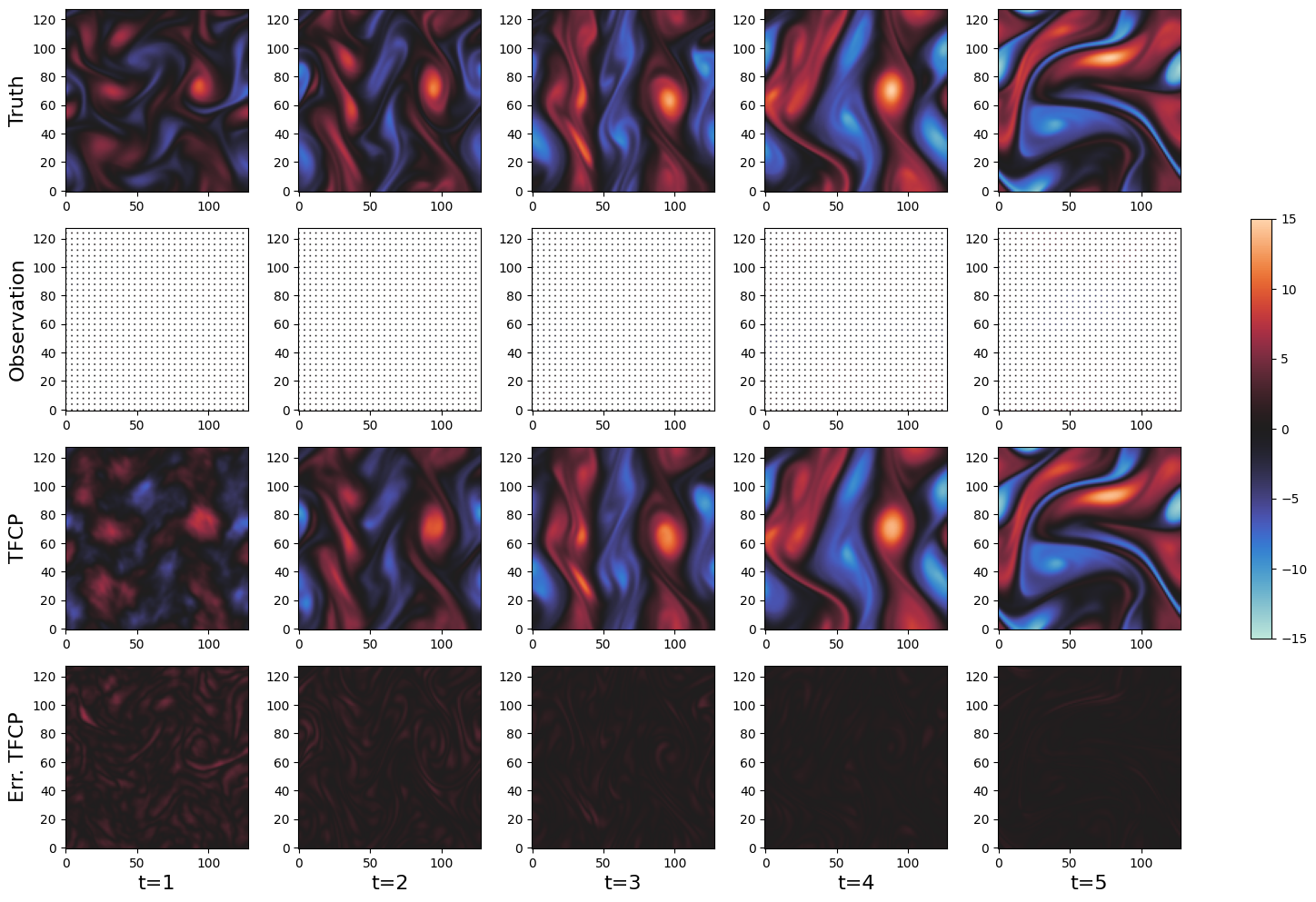}
		\caption{Filtering result of Vorticity of Kolmogorov flow, $1/16$ of the states are observed.}
		\label{fig:Kol_flow_saprse}
	\end{figure}
	
	\begin{figure}[!ht]
		\centering
		\includegraphics[width=.85\textwidth]{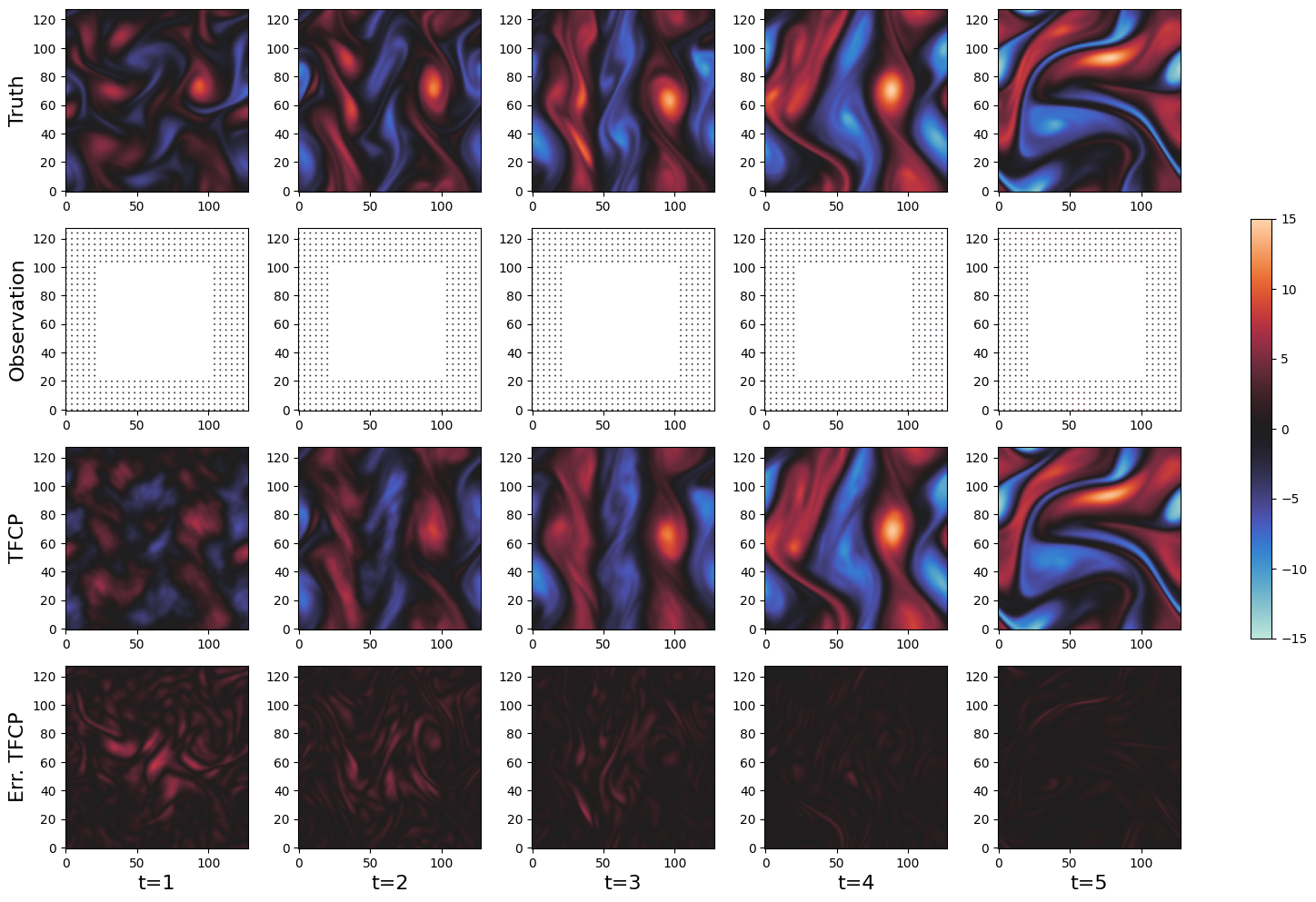}
		\caption{Filtering result of Vorticity of Kolmogorov flow, partial states are observed.}
		\label{fig:Kol_flow_partial}
	\end{figure}
	
	\section{Conclusion}\label{sec:5}
	In this work, we have introduced a likelihood-free transport filtering method that leverages couplings between state and observation variables through a block-triangular transport map. By formulating the filtering analysis step as the minimization of the MMD between the true joint distribution and its transport-induced approximation, our approach avoids explicit likelihood evaluations and naturally accommodates nonlinear, non-Gaussian dynamics. A key innovation is the use of gradient flows in reproducing kernel Hilbert spaces to derive an analytically tractable sequence of steepest-descent transport maps without iterative training or parametric optimization while ensuring convergence toward the true posterior. We established a covergence analysis, showing that the expected MMD between the approximate and exact posteriors decays at a rate of $\mathcal{O}(1/\sqrt{N})$. Furthermore, we extended the method to high-dimensional settings via a domain-localization strategy, enabling scalable and parallelizable inference. Numerical examples confirmed that the proposed filter outperforms conventional approaches in challenging nonlinear and non-Gaussian scenarios, offering both robustness against particle collapse and accurate posterior approximation. This framework opens a new avenue for likelihood-free Bayesian filtering grounded in optimal transport and kernel-based statistical divergences.
	
	\medskip
	\noindent\textbf{Acknowledgment:}
	L.~Jiang acknowledges the support of NSFC 12271408.
	
	\bibliographystyle{siamplain}
	\bibliography{references}
	
\end{document}